\definecolor{rightblue}{RGB}{76,114,176} 
\definecolor{rightorange}{RGB}{221,132,82} 
\definecolor{aliceblue}{rgb}{0.94, 0.97, 1.0} 
\definecolor{darkcerulean}{rgb}{0.03, 0.27, 0.49} 
\definecolor{iris}{rgb}{0.35, 0.31, 0.81} 
\definecolor{carmine}{rgb}{0.59, 0.0, 0.09} 
\definecolor{green(munsell)}{rgb}{0.0, 0.66, 0.47} 
\definecolor{springgreen}{RGB}{0,160,75}
\definecolor{celadon}{rgb}{0.67, 0.88, 0.69} 
\definecolor{bluerow}{rgb}{0.0, 0.53, 0.74} 
\definecolor{lightorange}{RGB}{255, 219, 187} 
\definecolor{lavenderblue}{rgb}{0.8, 0.8, 1.0}
\definecolor{blue(pigment)}{rgb}{0.2, 0.2, 0.6}
\definecolor{blue-violet}{rgb}{0.54, 0.17, 0.89}
\definecolor{seagreen}{RGB}{0,150,120}
\definecolor{blueseaborn}{RGB}{1,115,178}
\definecolor{orangeseaborn}{RGB}{222,143,6}
\definecolor{greenseaborn}{RGB}{1,158,115}
\def\eqref#1{equation~\ref{#1}}
\def\1{\bm{1}}
\DeclareMathAlphabet{\mathsfit}{\encodingdefault}{\sfdefault}{m}{sl}
\SetMathAlphabet{\mathsfit}{bold}{\encodingdefault}{\sfdefault}{bx}{n}
\newtheorem{theorem}{Theorem}
\theoremstyle{remark}
\newtheorem*{remark}{Remark}
\newtheorem{assumption}{Assumption}
\lstdefinelanguage{json}{
    basicstyle=\ttfamily\scriptsize,
    breaklines=true, 
    backgroundcolor=\color{gray!10}
}
\title{LLMs as In-Context Meta-Learners for Model and Hyperparameter Selection}
\author{%
  Youssef {Attia El Hili}\textsuperscript{{\normalfont 1}}\textsuperscript{{\normalfont 2}}, Albert Thomas\textsuperscript{{\normalfont 1}}, Malik Tiomoko\textsuperscript{{\normalfont 1}}, Abdelhakim Benechehab\textsuperscript{{\normalfont 1}}\textsuperscript{{\normalfont 3}}, 
  Corentin Léger \textsuperscript{{\normalfont 1}}, Corinne Ancourt\textsuperscript{{\normalfont 2}}, Bal\'{a}zs K\'{e}gl\textsuperscript{{\normalfont 1}}\\
  \textsuperscript{1} Huawei Noah's Ark Lab, Paris, France\\
  \textsuperscript{2} Centre de Recherche en Informatique, Mines Paris, PSL University\\
  \textsuperscript{3} Department of Data Science, EURECOM \\
  \texttt{youssef.attiaeh@gmail.com}
}
\date{}
\begin{document}
\maketitle

\begin{abstract}
	Model and hyperparameter selection are critical but challenging in machine learning, typically requiring expert intuition or expensive automated search. We investigate whether large language models (LLMs) can act as in-context meta-learners for this task. By converting each dataset into interpretable metadata, we prompt an LLM to recommend both model families and hyperparameters. We study two prompting strategies: (1) a zero-shot mode relying solely on pretrained knowledge, and (2) a meta-informed mode augmented with examples of models and their performance on past tasks. Across synthetic and real-world benchmarks, we show that LLMs can exploit dataset metadata to recommend competitive models and hyperparameters without search, and that improvements from meta-informed prompting demonstrate their capacity for in-context meta-learning. These results highlight a promising new role for LLMs as lightweight, general-purpose assistants for model selection and hyperparameter optimization. 
    Code is available at \href{https://github.com/Y-artios/LLM_CASH_demo}{this link}.

\end{abstract}

\keywords{ Large Language Models \and In-Context Learning \and Hyperparameter Optimization \and Meta-Learning}

\section{Introduction}

The performance of machine learning (ML) models hinges on the selection of appropriate algorithms and their hyperparameters. This joint optimization task is commonly referred to as the Combined Algorithm Selection and Hyperparameter optimization (CASH) problem \citep{thornton2013auto,bergstra2012,Snoek2012}. Traditionally, practitioners have relied on manual tuning, grid search, or Bayesian optimization techniques \citep{Mockus1978, Shahriari2016TakingTH} to navigate this complex search space. However, these approaches are computationally expensive and demand substantial domain expertise. This creates barriers to entry and limits the scalability of ML applications across diverse domains.

Large language models (LLMs) have recently shown strong capabilities in reasoning, knowledge synthesis, and problem-solving across domains \citep{Wei2022}. As they scale, they exhibit emergent behaviors that enable adaptation to new tasks by reusing prior experience in context \citep{Brown2020,dong-etal-2024-survey}. These behaviors have been interpreted as a form of in-context meta-learning, with transformers proposed as general-purpose meta-learners \citep{kirsch2024generalpurposeincontextlearningmetalearning} and LLMs studied explicitly in this role \citep{coda-forno2023metaincontext}. Much of this prior work has focused on demonstrating the phenomenon itself, often in synthetic or language-oriented tasks. By contrast, model and hyperparameter selection provides a practical and consequential setting in machine learning where generalization across tasks directly impacts performance and efficiency. If LLMs can transfer knowledge in this context, they may offer a new paradigm for addressing the CASH problem and extend our understanding of their adaptability beyond controlled demonstrations.
\begin{figure*}[h]
    \centering
    \includegraphics[width=\textwidth]{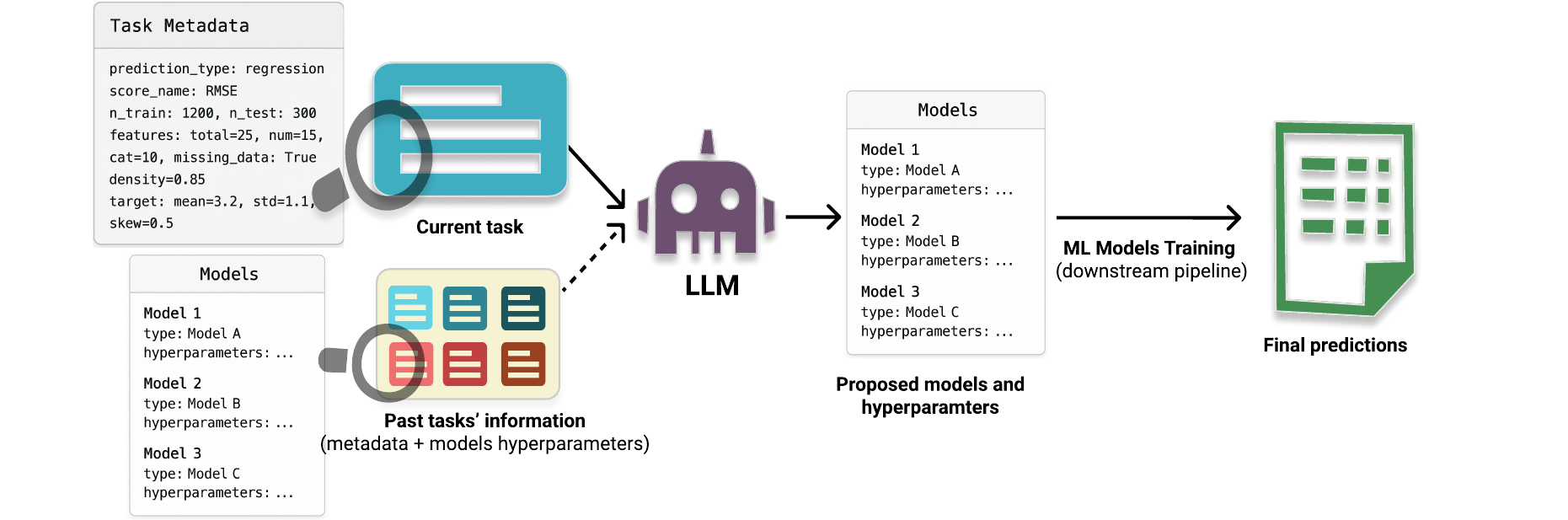}
    \caption{Overview of the method. Each task is represented by metadata, and the LLM outputs model and hyperparameter configurations. The dotted arrow indicates the inclusion of prior-task metadata-configuration pairs in the \emph{meta-informed} setting.} 
    \label{fig:overview}
\end{figure*}
This research introduces two prompting strategies for leveraging LLMs in model and hyperparameter selection. The \textit{Zero-Shot} strategy relies solely on high-level task metadata, requiring no prior examples. The \textit{Meta-Informed} strategy augments this by incorporating pairs of task metadata and well-performing model configurations from previous tasks, enabling more informed recommendations (Figure~\ref{fig:overview}). Unlike prior work \citep{Zheng2023,zhang2024usinglargelanguagemodels}, our approach operates without iterative validation feedback. It also enables cross-task generalization in the meta-informed case. Importantly, we prompt the LLM to propose complete configurations consisting of both model families and associated hyperparameters, which can then be directly evaluated or integrated into downstream pipelines.

We evaluate both prompting strategies on tabular regression and classification tasks. Results show that LLMs, when properly prompted, can make surprisingly effective recommendations even in zero-shot settings where conventional methods often require extensive experimentation. The meta-informed strategy further improves performance by leveraging prior knowledge, often approaching or matching the quality of expert-guided selections. Taken together, these findings highlight the potential of LLMs as meta-learners in automated machine learning: they can reason about datasets, models, and hyperparameters with minimal tuning, offering a scalable and accessible alternative to traditional search-based or expert-driven workflows. This also complements concurrent applications of LLMs to other stages of the AutoML pipeline such as feature engineering with CAAFE \citep{hollmann2023large}.

The remainder of this paper is structured as follows. 
Section~\ref{sec:related} reviews related work in hyperparameter optimization, meta-learning, and LLM-based methods. Section~\ref{sec:setup} introduces our formal problem setup and frames CASH as a meta-learning task. Section~\ref{sec:toy} presents a controlled synthetic experiment that motivates our approach by showing how LLM prompting can capture useful hyperparameter patterns in a simple setting. Section~\ref{sec:exp} then describes our methodology and evaluates LLM-based prompting strategies on a diverse suite of benchmark datasets. Section~\ref{sec:discussion} discusses broader implications, limitations, and future directions. Finally, Section~\ref{sec:conclusion} summarizes our contributions.

\section{Related Work}
\label{sec:related}
\paragraph{Hyperparameter Optimization.}
Early work on hyperparameter optimization (HPO) relied on simple search strategies such as grid search and random search \citep{bergstra2012}. More sophisticated model-based methods, such as Bayesian optimization (BO), iteratively fit surrogate models to past evaluations and propose promising configurations \citep{Bergstra2011, Snoek2012}. Subsequent advances introduced multi-fidelity and bandit-based approaches, including Successive Halving \citep{Jamieson2016} and Hyperband \citep{Li2017}, which exploit early stopping to allocate resources efficiently. Later extensions sought to transfer knowledge across related tasks or account for computational budgets, for example through multi-task Bayesian optimization and compute-aware methods \citep{Swersky2013, Golovin2017}. These methods significantly improved efficiency but still treat each optimization task largely in isolation.

\paragraph{Meta-Learning for HPO.}
To overcome this limitation, meta-learning approaches aim to accelerate HPO by leveraging prior experience across tasks. Transfer Neural Processes (TNP) \citep{Wei2021}, for example, incorporate meta-knowledge such as surrogate models and historical trial data to improve sample efficiency. Meta-Bayesian optimization methods extend this idea by learning priors over surrogate models from related tasks, enabling faster convergence on new optimization problems \citep{Feurer2015, Perrone2018}. Other approaches, such as ALFA \citep{Baik2020}, adapt hyperparameters dynamically during training using a meta-learner, while SHSR \citep{borboudakis2023meta} prunes unpromising regions of the search space using past AutoML runs. PriorBand \citep{mallik2023priorband} further accelerates HPO by combining expert beliefs with low-fidelity proxy tasks to guide search in deep learning pipelines. These methods illustrate the value of meta-knowledge, but they still assume a fixed model class.

\paragraph{The CASH Problem.}
In practice, algorithms and hyperparameters must be optimized jointly, formalized as the CASH problem \citep{thornton2013auto}. A common approach is to treat model choice as a categorical hyperparameter, as in Auto-WEKA \citep{thornton2013auto} and Auto-sklearn \citep{Feurer2015}, but the resulting search space is large and expensive to explore. Bandit-based formulations address this by casting algorithm selection as arms with HPO inside each arm, e.g., MaxUCB \citep{balef2025cashbanditsmaxkarmed}, Rising Bandits \citep{Li2020}, and ER-UCB \citep{Hu2021}. These improve scalability but still depend on extensive search. In contrast, our method tackles CASH directly by generating model and hyperparameter configurations without relying on hierarchical search or bandit-style exploration.

\paragraph{LLM-Based HPO.}
LLMs have recently been applied to hyperparameter optimization, for example through iterative refinement with feedback or by combining with Bayesian optimization \citep{zhang2024usinglargelanguagemodels, mahammadli2025sequentiallargelanguagemodelbased, liu2025agenthpo}. While promising, these approaches treat HPO in isolation and require multiple interaction rounds. By contrast, we address the broader CASH problem, producing complete model–hyperparameter configurations in a single inference. AutoML-GPT \citep{zhang2023automlgptautomaticmachinelearning} explores full pipeline automation, including preprocessing, but depends on explicit task similarity matching. Our method is simpler and more practical: we use prior tasks only as in-context examples, letting the LLM adapt implicitly, and we evaluate directly on real-world tabular datasets under standard CASH protocols.

\section{Problem Setup}
\label{sec:setup}
We frame model and hyperparameter selection as a meta-learning problem. Let $\mathcal{P}_{\mathcal{T}}$ denote a distribution over machine learning tasks. For each task $\mathcal{T} \sim \mathcal{P}_{\mathcal{T}}$, we are given a dataset $\mathcal{D}$ and a metadata representation $M$, which summarizes task-level properties such as input dimensionality, sample size, or distributional characteristics. Let $\theta \in \Theta$ denote a model configuration, comprising both the model type and its associated hyperparameters.  
For a task $\mathcal{T}$, let $L(\theta, \mathcal{T})$ denote the generalization error of configuration $\theta$.
The optimal configuration is defined as
\[
\theta^* = \arg\min_{\theta \in \Theta} L(\theta, \mathcal{T}).
\]

In practice, $\theta^*$ is unknown and must be approximated using train/validation/test splits of the dataset $\mathcal{D}$.

Our objective is to learn a recommendation function $f$ that maps task metadata to a high-performing configuration. Given a new task $\mathcal{T}$, the function receives a metadata instance $M$ along with $k$ support examples $\{(M_1, \theta_1^*), \ldots, (M_k, \theta_k^*)\}$ obtained from past tasks. The function must then predict a configuration $\theta = f(M; M_{1:k}, \theta_{1:k}^*)$ that performs well on $\mathcal{T}$.

In our approach, $f$ is implemented implicitly through in-context learning in a large language model: the LLM receives a prompt containing metadata and possibly prior examples, and outputs a predicted configuration $\theta$. This reduces to a \emph{zero-shot} setting when $k = 0$, where predictions must rely solely on $M$ and prior knowledge encoded in the model. When $k > 0$, the model can perform \emph{meta-informed} prediction by conditioning on past metadata–configuration pairs. To isolate and better understand this behavior, we first study a synthetic classification task where the optimal configuration $\theta^*$ can be computed analytically. We then proceed to evaluate on a suite of real-world tabular benchmark tasks.

\section{Motivation: Synthetic Ridge Regression Experiment}
\label{sec:toy}
Before evaluating LLM-based model selection on complex benchmarks, we first study a controlled synthetic task: predicting the optimal Ridge regularization parameter $\lambda^*$ for a binary classifier trained on Gaussian data. This setup isolates the meta-learning objective while avoiding confounding factors such as model choice, hyperparameter interactions, and data splits.

\paragraph{Analytic Test Error.}
\label{sec:closedform}


To evaluate hyperparameter predictions, we require the generalization error of Ridge regression as a function of $\lambda \in \Lambda$. Instead of using costly cross-validation, we leverage a closed-form expression from Random Matrix Theory (Theorem~\ref{thm:ridge_mean} in Appendix~\ref{appendix:rmt}), which provides exact test errors and enables precise computation of regret.

\begin{remark}[Applicability in low dimensions]
Although Theorem~\ref{thm:ridge_mean} is formally derived for high-dimensional settings, we verified that it remains accurate even for low-dimensional tasks (e.g., $d=2$).
\end{remark}
\paragraph{Synthetic Task Setup.}
Each task is represented by metadata (class sizes, means, covariances) and the LLM predicts $\lambda^*$ from a fixed logarithmic grid
\[
\Lambda = \{10^{-4}, 10^{-3}, \ldots, 10^{3}\}.
\]
For meta-learning evaluation, the LLM is provided with $k$ solved support tasks $(M_i, \lambda_i^*)_{1 \leq i \leq k}$ and a new target task $M$ and must predict the optimal $\lambda$. We vary $k \in \{1, 2, 5, 10, 15, 20, 50, 100\}$ to study how performance improves with more contextual examples.

For each trial, we compute the exact optimal $\lambda^*$ for all tasks using Theorem~\ref{thm:ridge_mean}, prompt the LLM with the support tasks and target metadata, and obtain a prediction $\hat\lambda$. The predicted value is then rounded to the nearest grid point in $\Lambda$, and performance is measured by regret:
\begin{equation*}
    \text{Regret} = L(\hat \lambda) - L(\lambda^*)
\end{equation*}

Details on task generation and prompt construction are provided in Appendices~\ref{appendix:toy_gen} and~\ref{appendix:toy_prompts}, respectively.

To interpret LLM performance, we consider two baselines:
\begin{itemize}
    \item \textbf{Context-only}: predicts the geometric mean of the support tasks' optimal $\lambda^*$ values, ignoring the target task metadata $M$. This tests whether the LLM simply regresses toward central values from context.
    \item \textbf{Logistic regression}; predicts $\lambda^*$ directly from task metada features. This acts as lightweight supervised meta-learner, simulating the case where cross-task training data is available.
\end{itemize}
Consistent improvements over both baselines indicates that the LLM leverages task-specific for meaningful adaptation without supervised training.

We evaluate the Qwen 2.5 family (7B, 14B, 32B, 72B) \citep{qwen2025qwen25technicalreport}, across decoding temperatures $\{0.0, 0.2, 0.4, 0.6, 0.8\}$. Prompt templates are provided in Appendix~\ref{appendix:toy_prompts}. To ensure valid outputs, generations are limited to 5 tokens with invalid predictions resampled.

\paragraph{Results.}  

\begin{figure}
    \centering
    \includegraphics[width=0.6\linewidth]{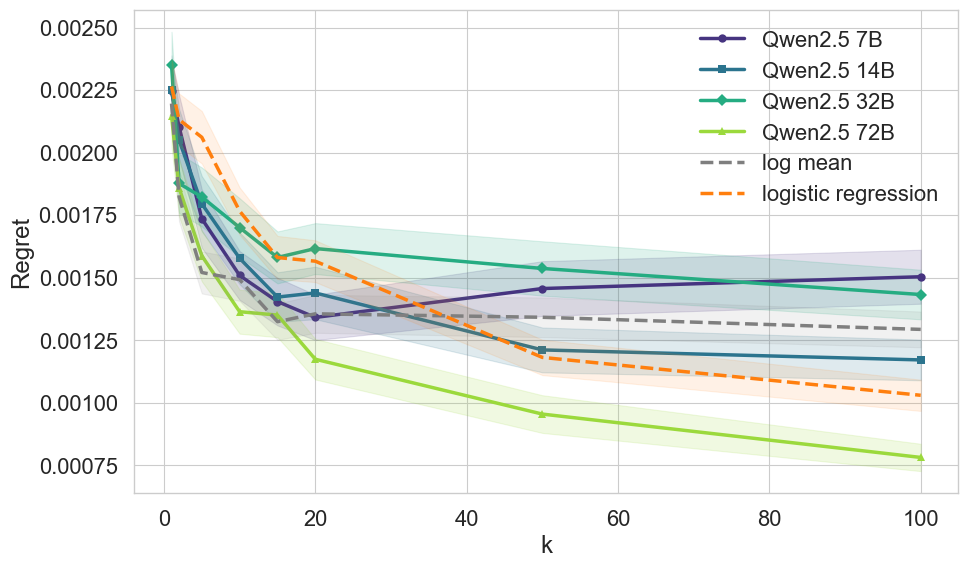}
    \caption{Regret vs. number of support tasks $k$, averaged across decoding temperatures. The dashed line represents a static geometric-mean baseline. Shaded regions denote 90\% confidence intervals: for model predictions, intervals are computed from the standard error over 5000 trials (1000 per temperature); for the baselines, intervals reflect 1000 trials. The 72B model is the only model to consistently outperform the baselines as $k$ increases, indicating scale-dependent emergence of in-context meta-learning.}
    \label{fig:regret-avg-temp}
\end{figure}

To assess the effect of model scale, Figure~\ref{fig:regret-avg-temp} shows regret as a function of $k$, the number of support tasks. The Qwen2.5 72B model consistently achieves the lowest regret, with its advantage over baselines growing as more context is provided. This indicates that the largest model not only adapts from a few examples, but also continues to benefit from larget support sets.

The baselines exhibit distinct limitations. The log-mean method matches LLMs for very small $k$ but quickly saturates at a suboptimal level. Logistic regression improves more gradually and eventually surpasses the log mean, yet it remains far below the 72B model across all $k$.


Smaller LLMs (7B--32B) track the baselines closely and show limited or inconsistent gains as $k$ increases, suggesting weaker in-context adaptation. By contrast, the 72B model demonstrates robust meta-learning: it surpasses both baselines even at large $k$ and continues to improve steadily with more support tasks.

Finally, we verified that the decoding temperature ($0.0$--$0.8$) has no measurable effect on regret across any model, confirming that our results are robust to this choice (see Appendix~\ref{appendix:temperature} for detailed plots). Overall, these findings suggest that sufficiently large LLMs can learn to generalize hyperparameter selection strategies from sparse supervision, without parameter updates.

\section{Methodology and Experiments}
\label{sec:exp}
We now describe our general evaluation framework and present empirical results on real-world tabular regression and classification benchmarks. The methodology extends the setup from Section~\ref{sec:setup}, and the experiments test whether the in-context meta-learning behaviors observed in the synthetic ridge regression setting also emerge in practical classification and regression tasks.
\subsection{Methodology}
As formalized in Section~\ref{sec:setup}, each task $\mathcal{T}_i$ is represented by a metadata block $M_i$, and the goal is to predict a configuration $\theta_i$ consisting of a set of models and their hyperparameters. In our setting, this set is intended to form an ensemble: the LLM proposes multiple candidate models whose predictions are later combined through the ensembling pipeline. We implement this mapping $f : M_i \mapsto \theta_i$ through in-context learning in a large language model.

\paragraph{Task metadata.}
\begin{wrapfigure}[11]{l}{0.45\textwidth}
\vspace{-2em}
{\scriptsize
\begin{verbatim}
# Metadata for kaggle_abalone 

## prediction_type 
regression 
## score_name 
rmsle 
## n_train: 90615 n_test: 60411 
## features 
total: 9 
numeric: 8 categorical: 1 
## missing_data 
has_missing: False 
## target_values min:
1 max: 29 mean: 9.697 std: 3.176 
\end{verbatim}
}
\vspace{-1em}
\end{wrapfigure}
We summarize each dataset using a fixed Markdown-style template designed for compactness and interpretability. 
The metadata captures prediction type, evaluation metric, sample sizes, feature composition (numeric vs.\ categorical), missingness indicators, and target statistics. 
Rather than enumerating every feature, which would make prompts impractically long for high-dimensional datasets, the template records only aggregated statistics (e.g., counts of feature types, summary ranges). 
A simplified example for the \texttt{abalone} challenge is shown on the left, and the full schema is provided in Appendix~\ref{app:task_metadata}.

We compared Markdown and JSON encodings, finding that Markdown reduced token length by roughly 30\% without degrading recommendation quality. This efficiency allows more support examples to be included in-context while keeping prompts short and interpretable.

\paragraph{Prompting strategies.}
We evaluate two prompting modes:
\begin{itemize}
    \item \textbf{Zero-Shot:} the LLM receives only the target metadata $M_j$, relying solely on pretrained knowledge.
    \item \textbf{Meta-Informed:} the LLM additionally observes a set of solved support tasks $\{(M_i, \theta_i^*)\}_{i=1}^k$, all drawn from the same prediction type (classification or regression). In this setting, the model is explicitly asked to identify similarities between tasks before recommending $\theta_j$.
\end{itemize}

In practice, the \textbf{Meta-Informed} strategy assumes access to previous tasks along with high-performing configurations. For this study, we obtained such configurations by running extensive hyperparameter search with HEBO \citep{Cowen-Rivers2022-HEBO} on a set of tabular regression and classification tasks. To maximize performance, ensembles (or blends) were built from the resulting models. We refer to the models with the highest contributions to these ensembles as \textbf{Context Blends}, and use them as the source of support examples passed to the prompt.

\paragraph{Configuration schema and hyperparameter grids.}
The LLM is instructed to output a JSON object describing an ensemble of 10 models. For each supported family (CatBoost \citep{Prokhorenkova2018}, LightGBM, XGBoost \citep{Chen2016}, and scikit-learn MLP \citep{Pedregosa2011}), we provide the model name, a list of valid hyperparameters, and a discrete grid of admissible values. This grid is included directly in the prompt, ensuring that the model generates configurations from a well-defined search space rather than free-form values. An excerpt of the schema is shown below (see Appendix~\ref{appendix:base_models} for full hyperparamater grids):

{\scriptsize
\begin{verbatim}
{
  "models": {
    "catboost": {
      "columns": ["bootstrap_type", "border_count", "grow_policy", ...],
      "values": []
    },
    "lgbm": {
      "columns": ["boosting_type", "colsample_bynode", "drop_rate", ...],
      "values": []
    },
    ...
  }
}
\end{verbatim}
}

\paragraph{Reasoning and output validation.}
We use the DeepSeek-R1 reasoning model \citep{deepseekai2025deepseekr1incentivizingreasoningcapability}, which naturally produces explanations of its choices. The LLM configuration is described in Appendix~\ref{app:deepseek}. Invalid generations are rare, but we apply lightweight post-processing when they occur. If the LLM outputs a numeric value that falls outside the predefined hyperparameter grid, we project it to the nearest valid grid point. For non-numeric fields (e.g., categorical options) that cannot be matched, we discard the configuration and resample a fresh output. Likewise, if the JSON structure itself is malformed, the entire configuration is rejected and regenerated. Each run uses a different set of support examples, ensuring robustness to contextual variation.

\paragraph{Prompt length and overhead.}
Prompt lengths remain modest: Zero-Shot prompts contain only one metadata block, while Meta-Informed prompts add up to $k$ support examples. In practice, the LLM forward pass incurs negligible cost compared to training the resulting models, making the overhead essentially free relative to model training.

\paragraph{Ensembling pipeline.} 
Each LLM call outputs 10 configurations, which we treat as candidate base models. 
We train these with cross-validation bagging and then combine their predictions using feedforward greedy blending \citep{caruana2004ensemble}. 
This procedure is applied consistently to LLM-based and baseline methods, providing a fair comparison and reflecting common ML ensembling practice.
\subsection{Datasets}
We evaluate our method on 22 Kaggle tabular challenges spanning both regression and classification. 
The benchmark covers a mix of ``playground" competitions (synthetic or repurposed datasets) and ``featured" challenges (industrial or scientific applications), providing a broad spectrum of problem settings. 
Kaggle tasks are particularly suitable for this study because they provide standardized train/test splits, diverse evaluation metrics, and well-documented leaderboards, which together ensure reproducibility and facilitate comparison with baselines. 

Prediction types range from regression to binary and multi-class classification, with metrics including error-based losses (RMSE, MAE, RMSLE), probabilistic measures (AUC, log-loss, NLL), and discrete scores (accuracy, $F_1$). 
Dataset scales vary widely from fewer than 2{,}000 training points (\texttt{horses}) to several hundred thousand (\texttt{media}, \texttt{insurance}), while feature dimensionality ranges from fewer than 10 (\texttt{abalone}) to over a thousand (\texttt{molecules}). 
This diversity ensures coverage of small vs.\ large data regimes, low- vs.\ high-dimensional settings, and synthetic vs.\ real-world tasks. 
The full dataset list with detailed statistics is provided in Table~\ref{tabChallenges} in the Appendix.

\subsection{Baselines}
\label{sec:baselines}
We compare LLM-based recommendations against four baselines representing different strategies for the CASH problem (full details in Appendix~\ref{app:baselines}):  
\textbf{Context-Random} (uniformly samples model–hyperparameter configurations from the same reference pool as the one passed to the LLM),  
\textbf{Random-Hyperopt} (at each step, uniformly samples a model family and then applies a hyperparameter optimizer within that family),  
\textbf{LGBM-Hyperopt} (optimizer restricted to LightGBM, capturing the strength of a single tuned family), and  
\textbf{MaxUCB-Hyperopt} (treats each family as a bandit arm, selecting the one with the highest upper-confidence bound before a single optimization step \citep{balef2025cashbanditsmaxkarmed}). \textbf{Context Blends} consist of ensembles obtained from an extensive hyperparameter search. They provide upper-bound baselines: they achieve high performance through extensive search, and thus set the performance we seek to approach under a much more limited budget.
All \textbf{-Hyperopt} baselines use \textsc{HEBO} \citep{Cowen-Rivers2022-HEBO}, chosen for its strong and consistent performance across diverse tasks \citep{kegl2023systematicstudycomparinghyperparameter}\footnote{HEBO begins with random search, using $1 + \text{(dimension of the hyperparameter space)}$ evaluations, before switching to Bayesian optimization.}.

\subsection{Evaluation Metric}
We assess blend quality using the private leaderboard percentile rank ($p_{\text{rank}}$), which measures the percentage of submissions beaten by a given configuration on Kaggle’s hidden test set. 
A value of $p_{\text{rank}}=100$ indicates the top submission on the leaderboard, while $p_{\text{rank}}=0$ corresponds to the lowest. 
This metric is scale-invariant across datasets with different evaluation metrics and directly reflects the competitive standard of Kaggle challenges. 
We report mean $p_{\text{rank}}$ across tasks, with uncertainty estimated from the standard error over random seeds.

\subsection{Performance Comparison}
\label{sec:performance}

\begin{wrapfigure}[21]{r}{0.5\textwidth} %
    \centering
    \includegraphics[width=0.5\textwidth]{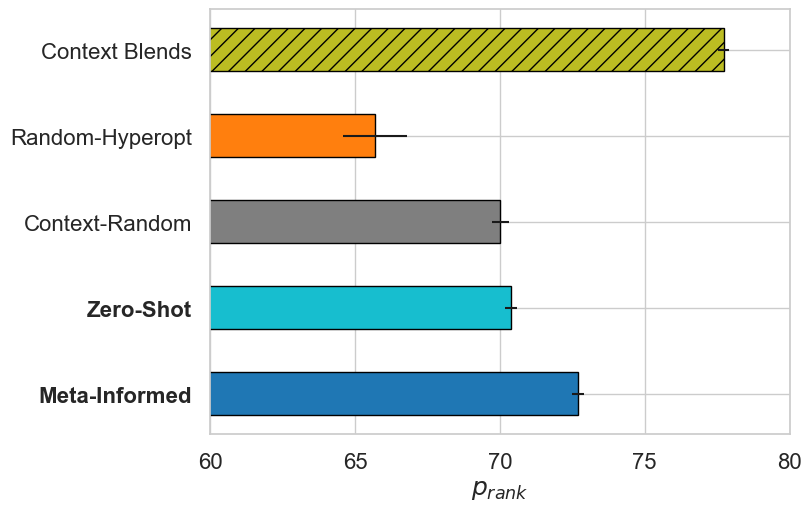}
\caption{Comparison of prompting strategies and baselines in terms of $p_{\text{rank}}$. 
The \textbf{Context Blends} produced by AutoML performance for each challenge are shown as a reference. Error bars indicate 90\% confidence intervals of the mean across 8 random seeds per dataset.}
    \label{fig:prank}
\end{wrapfigure}
We compare LLM-generated ensembles against the baselines introduced in Section~\ref{sec:baselines}, using the private leaderboard percentile rank ($p_{\text{rank}}$; higher is better) as our evaluation metric. 
For fairness, all methods are restricted to training exactly 10 models on each dataset. 
This provides a comparable runtime budget across methods, since model training is the dominant cost irrespective of how configurations are proposed. 

\paragraph{Results.}
Blend quality is measured using the private leaderboard percentile rank (p-rank; higher is better) after training on the Kaggle datasets. Figure~\ref{fig:prank} summarizes the average performance across 22 datasets. \textbf{Meta-Informed} achieves the strongest LLM-driven performance ($72.7$), surpassing both \textbf{Zero-Shot} ($70.4$) and \textbf{Context-Random} ($70.0$), while clearly outperforming Hyperopt based baselines including the best one \textbf{Random-Hyperopt} ($65.7$). Although the AutoML-derived \textbf{Context Blends} remains higher ($77.7$), this performance is achieved at the cost of a much more expensive procedure, whereas our strategies rely on training only 10 models. Importantly, the significant improvement of \textbf{Meta-Informed} over \textbf{Context-Random} indicates that the LLM is not merely sampling from the metadata, but is leveraging past tasks' information in a way that reflects genuine adaptation.
Finally, across most datasets (Table~\ref{partial_table}), LLM-based methods exhibit lower uncertainty than Hyperopt baselines, indicating more stable performance. A more detailed analysis of per-dataset patterns is provided in Appendix~\ref{app:detailled_discussion}.

\begin{table*}[h]
\caption{Kaggle p-rank results across all challenges. Uncertainty is reported as $\pm$ values, representing the 90\% confidence interval based on the standard error across 8 random seeds. }
\vspace{1mm}
    \centering
{\scriptsize
\begin{tabular}{|l|c|c|c|c|c|c|}
\hline
\textbf{Kaggle} & \textbf{Meta} & \textbf{Zero} & \textbf{Context} & \textbf{Random} & \textbf{MaxUCB} & \textbf{LGBM} \\

\textbf{Challenge} & \textbf{-Informed} & \textbf{-Shot} & \textbf{-Random} & \textbf{-Hyperopt} & \textbf{-Hyperopt} & \textbf{-Hyperopt} \\
\hline
\href{https://www.kaggle.com/competitions/playground-series-s4e4}{abalone} & 85.73 $\pm$ 3.3 & 74.67 $\pm$ 4.6 & \textbf{87.87 $\pm$ 2.3} & 58.95 $\pm$ 4.6 & 56.53 $\pm$ 9.0 & 64.21 $\pm$ 11.3 \\
\href{https://www.kaggle.com/competitions/allstate-claims-severity}{allstate} & \textbf{69.92 $\pm$ 2.3} & 61.66 $\pm$ 2.9 & 65.41 $\pm$ 5.0 & 50.05 $\pm$ 2.4 & 56.25 $\pm$ 2.7 & 51.0 $\pm$ 2.7 \\
\href{https://www.kaggle.com/competitions/playground-series-s3e3}{attrition} & 59.51 $\pm$ 1.7 & \textbf{61.12 $\pm$ 1.8} & 57.31 $\pm$ 2.3 & 59.36 $\pm$ 3.3 & 58.69 $\pm$ 2.6 & 48.21 $\pm$ 5.0 \\
\href{https://www.kaggle.com/competitions/playground-series-s3e14}{blueberry} & \textbf{81.16 $\pm$ 2.4} & 79.86 $\pm$ 1.7 & 78.96 $\pm$ 3.8 & 70.77 $\pm$ 5.3 & 62.9 $\pm$ 7.1 & 65.87 $\pm$ 7.7 \\
\href{https://www.kaggle.com/competitions/playground-series-s4e1}{churn} & 70.35 $\pm$ 0.9 & 68.73 $\pm$ 0.9 & 68.71 $\pm$ 3.0 & 65.07 $\pm$ 4.0 & 62.98 $\pm$ 6.0 & \textbf{70.64 $\pm$ 1.0} \\
\href{https://www.kaggle.com/competitions/playground-series-s3e26}{cirrhosis} & 70.58 $\pm$ 3.6 & 69.09 $\pm$ 1.4 & \textbf{73.06 $\pm$ 1.8} & 64.61 $\pm$ 4.6 & 66.96 $\pm$ 1.9 & 70.17 $\pm$ 2.0 \\
\href{https://www.kaggle.com/competitions/playground-series-s3e9}{concrete strength} & 74.34 $\pm$ 17.9 & 74.19 $\pm$ 6.8 & 59.37 $\pm$ 16.1 & \textbf{88.81 $\pm$ 5.4} & 75.46 $\pm$ 13.8 & 83.21 $\pm$ 9.3 \\
\href{https://www.kaggle.com/competitions/forest-cover-type-prediction}{covertype} & \textbf{67.78 $\pm$ 4.0} & 58.35 $\pm$ 7.6 & 60.05 $\pm$ 10.3 & 56.75 $\pm$ 11.0 & 53.75 $\pm$ 6.2 & 32.0 $\pm$ 3.4 \\
\href{https://www.kaggle.com/competitions/playground-series-s3e16}{crab age} & \textbf{68.87 $\pm$ 0.7} & 68.81 $\pm$ 0.6 & 67.67 $\pm$ 1.2 & 61.84 $\pm$ 2.3 & 59.53 $\pm$ 3.2 & 63.84 $\pm$ 1.8 \\
\href{https://www.kaggle.com/competitions/GiveMeSomeCredit}{credit fusion} & 96.61 $\pm$ 1.0 & 96.71 $\pm$ 1.1 & 90.91 $\pm$ 1.7 & 96.35 $\pm$ 0.9 & 94.12 $\pm$ 1.8 & \textbf{96.75 $\pm$ 1.5} \\
\href{https://www.kaggle.com/competitions/tabular-playground-series-aug-2022}{failure} & 41.12 $\pm$ 1.5 & 43.52 $\pm$ 1.7 & 41.25 $\pm$ 0.8 & 43.7 $\pm$ 2.6 & 47.15 $\pm$ 5.0 & \textbf{48.15 $\pm$ 7.0} \\
\href{https://www.kaggle.com/competitions/playground-series-s3e15}{heat flux fi} & \textbf{93.4 $\pm$ 5.0} & 90.7 $\pm$ 4.3 & 83.65 $\pm$ 8.6 & 69.07 $\pm$ 6.6 & 47.37 $\pm$ 11.3 & 36.22 $\pm$ 17.1 \\
\href{https://www.kaggle.com/competitions/playground-series-s3e22}{horses} & 82.39 $\pm$ 7.7 & \textbf{82.78 $\pm$ 5.6} & 75.31 $\pm$ 10.6 & 81.15 $\pm$ 6.2 & 72.7 $\pm$ 9.2 & 79.75 $\pm$ 5.7 \\
\href{https://www.kaggle.com/competitions/playground-series-s3e1}{housing california} & \textbf{62.53 $\pm$ 0.6} & 54.84 $\pm$ 2.4 & 60.07 $\pm$ 2.0 & 46.9 $\pm$ 6.8 & 42.15 $\pm$ 8.2 & 52.71 $\pm$ 3.9 \\
\href{https://www.kaggle.com/competitions/predict-who-is-more-influential-in-a-social-network}{influencers} & 76.84 $\pm$ 7.4 & \textbf{83.55 $\pm$ 1.4} & 80.52 $\pm$ 2.8 & 82.95 $\pm$ 2.7 & 82.03 $\pm$ 3.0 & \textbf{87.45 $\pm$ 1.9} \\
\href{https://www.kaggle.com/competitions/tabular-playground-series-feb-2021}{insurance} & \textbf{74.68 $\pm$ 2.4} & 68.16 $\pm$ 1.8 & 67.9 $\pm$ 2.1 & 62.53 $\pm$ 5.9 & 66.76 $\pm$ 4.2 & 64.6 $\pm$ 3.4 \\
\href{https://www.kaggle.com/competitions/playground-series-s4e10}{loan approval} & 71.58 $\pm$ 2.6 & 63.29 $\pm$ 5.5 & 66.84 $\pm$ 5.4 & 62.64 $\pm$ 6.9 & 60.81 $\pm$ 4.8 & \textbf{74.43 $\pm$ 0.9} \\
\href{https://www.kaggle.com/competitions/playground-series-s3e11}{media} & \textbf{62.95 $\pm$ 1.4} & 57.52 $\pm$ 2.0 & 61.81 $\pm$ 2.5 & 49.5 $\pm$ 7.5 & 47.87 $\pm$ 5.6 & 26.07 $\pm$ 2.8 \\
\href{https://www.kaggle.com/competitions/playground-series-s4e11}{mental health} & \textbf{92.99 $\pm$ 3.0} & 79.77 $\pm$ 10.2 & 89.69 $\pm$ 5.2 & 75.34 $\pm$ 9.5 & 73.39 $\pm$ 9.3 & 80.11 $\pm$ 7.7 \\
\href{https://www.kaggle.com/competitions/mercedes-benz-greener-manufacturing}{mercedes} & 17.81 $\pm$ 2.8 & 36.44 $\pm$ 7.8 & 35.26 $\pm$ 10.6 & 36.57 $\pm$ 8.6 & \textbf{38.94 $\pm$ 4.7} & 25.42 $\pm$ 2.0 \\
\href{https://www.kaggle.com/competitions/bioresponse}{molecules} & \textbf{97.52 $\pm$ 1.5} & 96.34 $\pm$ 1.6 & 96.32 $\pm$ 3.3 & 96.33 $\pm$ 2.6 & 94.84 $\pm$ 1.9 & 78.02 $\pm$ 12.6 \\
\href{https://www.kaggle.com/competitions/tabular-playground-series-jan-2021}{unknown a} & \textbf{80.56 $\pm$ 0.8} & 78.6 $\pm$ 0.8 & 72.59 $\pm$ 2.4 & 66.17 $\pm$ 2.5 & 61.75 $\pm$ 6.0 & 61.41 $\pm$ 5.5 \\
\hline
\textbf{Mean} & \textbf{72.69 $\pm$ 0.2} & $70.39 \pm 0.2$ & $70.02 \pm 0.3$ & $65.7 \pm 1.1$ & $62.86 \pm 1.2$ &  $61.8 \pm 1.1 $ \\
\hline
\end{tabular}
}
\label{partial_table}
\end{table*}

\subsection{Performance Efficiency}
To complement performance ranking, we also evaluate efficiency relative to standard hyperparameter optimization. 
For this comparison, we focus on a subset of six datasets: \texttt{abalone}, \texttt{blueberry}, \texttt{covertype}, \texttt{heat flux fi}, \texttt{horses}, and \texttt{media}. 


\begin{figure}[h!]
    \centering
    \includegraphics[width=0.75\textwidth]{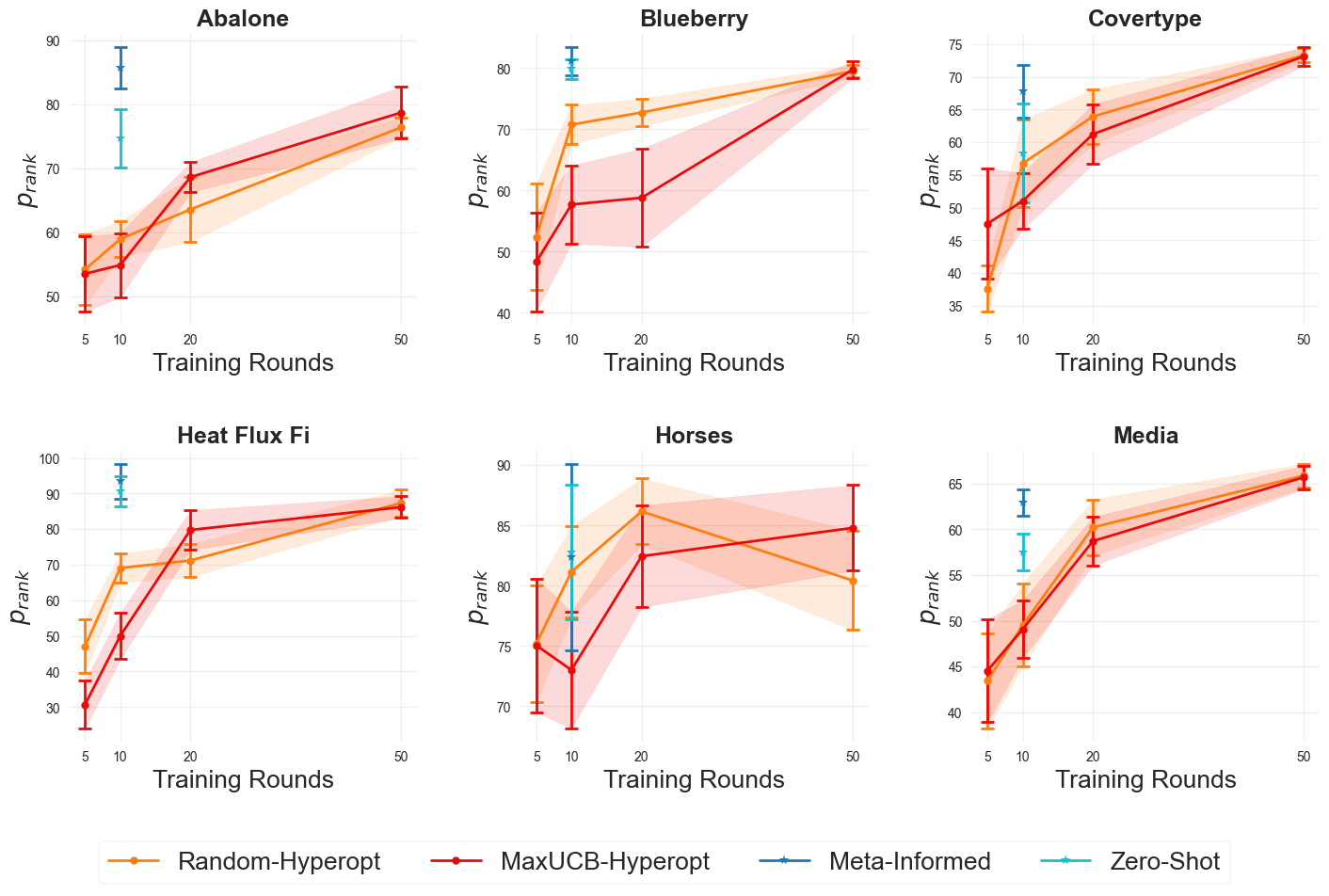}
    \caption{$p_{\text{rank}}$ over training \emph{rounds} for \textbf{Random-Hyperopt}, \textbf{MaxUCB-Hyperopt}, \textbf{Meta-Informed}, and \textbf{Zero-Shot} across the six selected datasets. Error bars indicate 90\% confidence intervals using standard error across 8 seeds.}
    \label{fig:curves}
\end{figure}

We define one \emph{round} as training a single model configuration followed by its integration into the blending pipeline, ensuring all methods incur the same per-round cost. 
The LLM based methods (\textbf{Zero-Shot} and \textbf{Meta-Informed}) produce exactly ten configurations in a single forward pass and thus correspond to a budget of 10 rounds. 
By contrast, \textbf{Random-Hyperopt} and \textbf{MaxUCB-Hyperopt} can continue to propose new configurations sequentially and we evaluate their performance after 5, 10, 20 and 50 rounds.

\paragraph{Results.} On five of these six datasets, the LLM based methods match or exceed performance of Hyperopt ones within the same budget of ten training rounds, while Hyperopt methods seems to require substantially more rounds to achieve similar performance (Figure~\ref{fig:curves}). This highlights an efficiency advantage when measured on a per-round basis: LLM-based methods deliver high-quality configurations immediately, whereas Hyperopt ones improve only gradually through extended exploration. 
In practice, this advantage could be even more pronounced since LLMs produce all of their candidates in a single inference step. 
This means that the full set of configurations is available upfront and can be trained in parallel, while Hyperopt methods must generate candidates one at a time, limiting opportunities for parallelization and slowing down the overall search process.

\subsection{Interpretability}

Another advantage of LLM-based methods is interpretability. 
Unlike conventional hyperparameter optimization, which produces configurations without explanation, the LLM generates structured outputs accompanied by reasoning traces. 
These traces highlight how the model can relate task metadata to past examples when proposing new model–hyperparameter ensembles. 
For example, the LLM often explains its choices by linking dataset properties to its choices such as favoring CatBoost on feature sets dominated by categorical variables, or suggesting deeper trees when the regression task involves many numeric features. Appendix~\ref{app:traces} presents selected reasoning traces that illustrate how the model draws on prior tasks and/or its internal knowledge to guide model and hyperparameter recommendations.

\section{Discussion}
\label{sec:discussion}
While our results establish the competitiveness of LLM-based CASH strategies, they also outline challenges that remain to be addressed. As detailed in Appendix~\ref{app:detailled_discussion}, performance on small datasets or those with extreme feature-to-sample ratios is less consistent, pointing to a dependence on richer metadata for reliable adaptation. This suggests that characterizing the conditions under which LLMs succeed or fail will be an important direction for future work.
The methods proved stable to shuffling the order of items within the prompt (Appendix~\ref{app:robustness}), suggesting that performance is not strongly tied to positional artifacts. Finally, our study restricted evaluation to four model families for tractability, but extending coverage to a broader set of models and hyperparameters will be essential for assessing generality and exploring the full potential of LLM-based CASH.

\section{Conclusion}
\label{sec:conclusion}
Our experiments show that large language models can exploit metadata from support tasks to recommend models and hyperparameters competitively without iterative search. They also provide strong task-dependent defaults, offering practitioners a practical starting point without extensive tuning. These results demonstrate the viability of LLMs as in-context meta-learners for the CASH problem and highlight their potential as an efficient complement to conventional AutoML pipelines.

\bibliographystyle{unsrtnat}
\bibliography{references}  

@article{tiomoko2020large,
  title={Large dimensional analysis and improvement of multi task learning},
  author={Tiomoko, Malik and Couillet, Romain and Tiomoko, Hafiz},
  journal={arXiv preprint arXiv:2009.01591},
  year={2020}
}

@book{couillet2011random,
  title={Random matrix methods for wireless communications},
  author={Couillet, Romain and Debbah, Merouane},
  year={2011},
  publisher={Cambridge University Press}
}

@misc{zhang2024usinglargelanguagemodels,
      title={Using Large Language Models for Hyperparameter Optimization}, 
      author={Michael R. Zhang and Nishkrit Desai and Juhan ae and Jonathan Lorraine and Jimmy },
      year={2024},
      eprint={2312.04528},
      archivePrefix={arXiv},
      primaryClass={cs.LG},
      url={https://arxiv.org/abs/2312.04528}, 
}

@misc{kegl2023systematicstudycomparinghyperparameter,
      title={A systematic study comparing hyperparameter optimization engines on tabular data}, 
      author={Balazs Kegl},
      year={2023},
      eprint={2311.15854},
      archivePrefix={arXiv},
      primaryClass={cs.LG},
      url={https://arxiv.org/abs/2311.15854}, 
}

@misc{mahammadli2025sequentiallargelanguagemodelbased,
      title={Sequential Large Language Model-Based Hyper-parameter Optimization}, 
      author={Kanan Mahammadli and Seyda Ertekin},
      year={2025},
      eprint={2410.20302},
      archivePrefix={arXiv},
      primaryClass={cs.LG},
      url={https://arxiv.org/abs/2410.20302}, 
}

@inproceedings{
liu2025agenthpo,
title={Agent{HPO}: Large Language Model Agent for  Hyper-Parameter Optimization},
author={Siyi Liu and Chen Gao and Yong Li},
booktitle={The Second Conference on Parsimony and Learning (Proceedings Track)},
year={2025},
url={https://openreview.net/forum?id=HU3yfXcoKU}
}

@inproceedings{Thornton2013auto,
author = {Thornton, Chris and Hutter, Frank and Hoos, Holger H. and Leyton-Brown, Kevin},
title = {Auto-WEKA: combined selection and hyperparameter optimization of classification algorithms},
year = {2013},
isbn = {9781450321747},
publisher = {Association for Computing Machinery},
address = {New York, NY, USA},
url = {https://doi.org/10.1145/2487575.2487629},
doi = {10.1145/2487575.2487629},
booktitle = {Proceedings of the 19th ACM SIGKDD International Conference on Knowledge Discovery and Data Mining},
pages = {847–855},
numpages = {9},
keywords = {hyperparameter optimization, model selection, weka},
location = {Chicago, Illinois, USA},
series = {KDD '13}
}

@inproceedings{Bergstra2011,
 author = {Bergstra, James and Bardenet, R\'{e}mi and Bengio, Yoshua and K\'{e}gl, Bal\'{a}zs},
 booktitle = {Advances in Neural Information Processing Systems},
 editor = {J. Shawe-Taylor and R. Zemel and P. Bartlett and F. Pereira and K.Q. Weinberger},
 pages = {},
 publisher = {Curran Associates, Inc.},
 title = {Algorithms for Hyper-Parameter Optimization},
 url = {https://proceedings.neurips.cc/paper_files/paper/2011/file/86e8f7ab32cfd12577bc2619bc635690-Paper.pdf},
 volume = {24},
 year = {2011}
}

@article{bergstra2012,
author = {Bergstra, James and Bengio, Y.},
year = {2012},
month = {03},
pages = {281-305},
title = {Random Search for Hyper-Parameter Optimization},
volume = {13},
journal = {The Journal of Machine Learning Research}
}

@inproceedings{Snoek2012,
 author = {Snoek, Jasper and Larochelle, Hugo and Adams, Ryan P},
 booktitle = {Advances in Neural Information Processing Systems},
 editor = {F. Pereira and C.J. Burges and L. Bottou and K.Q. Weinberger},
 pages = {},
 publisher = {Curran Associates, Inc.},
 title = {Practical Bayesian Optimization of Machine Learning Algorithms},
 url = {https://proceedings.neurips.cc/paper_files/paper/2012/file/05311655a15b75fab86956663e1819cd-Paper.pdf},
 volume = {25},
 year = {2012}
}

@article{Li2017,
author = {Li, Lisha and Jamieson, Kevin and DeSalvo, Giulia and Rostamizadeh, Afshin and Talwalkar, Ameet},
title = {Hyperband: a novel bandit-based approach to hyperparameter optimization},
year = {2017},
issue_date = {January 2017},
publisher = {JMLR.org},
volume = {18},
number = {1},
issn = {1532-4435},
journal = {J. Mach. Learn. Res.},
month = jan,
pages = {6765–6816},
numpages = {52},
keywords = {online optimization, model selection, infinite-armed bandits, hyperparameter optimization, deep learning}
}

@InProceedings{Jamieson2016,
  title = 	 {Non-stochastic Best Arm Identification and Hyperparameter Optimization},
  author = 	 {Jamieson, Kevin and Talwalkar, Ameet},
  booktitle = 	 {Proceedings of the 19th International Conference on Artificial Intelligence and Statistics},
  pages = 	 {240--248},
  year = 	 {2016},
  editor = 	 {Gretton, Arthur and Robert, Christian C.},
  volume = 	 {51},
  series = 	 {Proceedings of Machine Learning Research},
  address = 	 {Cadiz, Spain},
  month = 	 {09--11 May},
  publisher =    {PMLR},
  pdf = 	 {http://proceedings.mlr.press/v51/jamieson16.pdf},
  url = 	 {https://proceedings.mlr.press/v51/jamieson16.html},
}

@article{Li2020,
author = {Li, Yang and Jiawei, Jiang and Gao, Jinyang and Shao, Yingxia and Zhang, Ce and Cui, Bin},
year = {2020},
month = {04},
pages = {4763-4771},
title = {Efficient Automatic CASH via Rising Bandits},
volume = {34},
journal = {Proceedings of the AAAI Conference on Artificial Intelligence},
doi = {10.1609/aaai.v34i04.5910}
}

@article{Hu2021,
author = {Hu, Yi-Qi and Liu, Xu-Hui and Li, Shu-Qiao and Yu, Yang},
year = {2021},
month = {07},
pages = {1-1},
title = {Cascaded Algorithm Selection with Extreme-Region UCB Bandit},
volume = {PP},
journal = {IEEE Transactions on Pattern Analysis and Machine Intelligence},
doi = {10.1109/TPAMI.2021.3094844}
}

@inproceedings{Feurer2015,
 author = {Feurer, Matthias and Klein, Aaron and Eggensperger, Katharina and Springenberg, Jost and Blum, Manuel and Hutter, Frank},
 booktitle = {Advances in Neural Information Processing Systems},
 editor = {C. Cortes and N. Lawrence and D. Lee and M. Sugiyama and R. Garnett},
 pages = {},
 publisher = {Curran Associates, Inc.},
 title = {Efficient and Robust Automated Machine Learning},
 url = {https://proceedings.neurips.cc/paper_files/paper/2015/file/11d0e6287202fced83f79975ec59a3a6-Paper.pdf},
 volume = {28},
 year = {2015}
}

@inproceedings{Swersky2013,
 author = {Swersky, Kevin and Snoek, Jasper and Adams, Ryan P},
 booktitle = {Advances in Neural Information Processing Systems},
 editor = {C.J. Burges and L. Bottou and M. Welling and Z. Ghahramani and K.Q. Weinberger},
 pages = {},
 publisher = {Curran Associates, Inc.},
 title = {Multi-Task Bayesian Optimization},
 url = {https://proceedings.neurips.cc/paper_files/paper/2013/file/f33ba15effa5c10e873bf3842afb46a6-Paper.pdf},
 volume = {26},
 year = {2013}
}

@inproceedings{Golovin2017,
author = {Golovin, Daniel and Solnik, Benjamin and Moitra, Subhodeep and Kochanski, Greg and Karro, John and Sculley, D.},
title = {Google Vizier: A Service for Black-Box Optimization},
year = {2017},
isbn = {9781450348874},
publisher = {Association for Computing Machinery},
address = {New York, NY, USA},
url = {https://doi.org/10.1145/3097983.3098043},
doi = {10.1145/3097983.3098043},
booktitle = {Proceedings of the 23rd ACM SIGKDD International Conference on Knowledge Discovery and Data Mining},
pages = {1487–1495},
numpages = {9},
keywords = {machine learning, gaussian process, black-box optimization, bayesian inference},
location = {Halifax, NS, Canada},
series = {KDD '17}
}

@InProceedings{Wei2021,
  title = 	 {Meta-learning Hyperparameter Performance Prediction with Neural Processes},
  author =       {Wei, Ying and Zhao, Peilin and Huang, Junzhou},
  booktitle = 	 {Proceedings of the 38th International Conference on Machine Learning},
  pages = 	 {11058--11067},
  year = 	 {2021},
  editor = 	 {Meila, Marina and Zhang, Tong},
  volume = 	 {139},
  series = 	 {Proceedings of Machine Learning Research},
  month = 	 {18--24 Jul},
  publisher =    {PMLR},
  pdf = 	 {http://proceedings.mlr.press/v139/wei21c/wei21c.pdf},
  url = 	 {https://proceedings.mlr.press/v139/wei21c.html},
}

@inproceedings{
mallik2023priorband,
title={PriorBand: Practical Hyperparameter Optimization in the Age of Deep Learning},
author={Neeratyoy Mallik and Eddie Bergman and Carl Hvarfner and Danny Stoll and Maciej Janowski and Marius Lindauer and Luigi Nardi and Frank Hutter},
booktitle={Thirty-seventh Conference on Neural Information Processing Systems},
year={2023},
url={https://openreview.net/forum?id=uoiwugtpCH}
}

@inproceedings{Baik2020,
author = {Baik, Sungyong and Choi, Myungsub and Choi, Janghoon and Kim, Heewon and Lee, Kyoung Mu},
title = {Meta-learning with adaptive hyperparameters},
year = {2020},
isbn = {9781713829546},
publisher = {Curran Associates Inc.},
address = {Red Hook, NY, USA},
booktitle = {Proceedings of the 34th International Conference on Neural Information Processing Systems},
articleno = {1743},
numpages = {11},
location = {Vancouver, BC, Canada},
series = {NIPS '20}
}

@article{borboudakis2023meta,
  title={A Meta-Level Learning Algorithm for Sequential Hyper-Parameter Space Reduction in AutoML},
  author={Borboudakis, Giorgos and Charonyktakis, Paulos and Paraschakis, Konstantinos and Tsamardinos, Ioannis},
  journal={arXiv preprint arXiv:2312.06305},
  year={2023}
}

@inproceedings{Perrone2018,
  title={Scalable Hyperparameter Transfer Learning},
  author={Valerio Perrone and Rodolphe Jenatton and Matthias W. Seeger and C. Archambeau},
  booktitle={Neural Information Processing Systems},
  year={2018},
  url={https://api.semanticscholar.org/CorpusID:54035096}
}

@Article{Mockus1978,
  author       = {Mockus, Jonas and Tiesis, Vytautas and Zilinskas, Antanas},
  title        = {The Application of {B}ayesian Methods for Seeking the Extremum},
  volume       = {2},
  number       = {117-129},
  pages        = {2},
  year         = {1978},
  journal = {Towards Global Optimization},
  publisher    = {Amsterdam: Elsevier},
}

@article{Shahriari2016TakingTH,
  title={Taking the Human Out of the Loop: A Review of Bayesian Optimization},
  author={Bobak Shahriari and Kevin Swersky and Ziyun Wang and Ryan P. Adams and Nando de Freitas},
  journal={Proceedings of the IEEE},
  year={2016},
  volume={104},
  pages={148-175},
  url={https://api.semanticscholar.org/CorpusID:14843594}
}

@article{Cowen-Rivers2022-HEBO,
author = {Cowen-Rivers, Alexander and Lyu, Wenlong and Tutunov, Rasul and Wang, Zhi and Grosnit, Antoine and Griffiths, Ryan-Rhys and Maravel, Alexandre and Hao, Jianye and Wang, Jun and Peters, Jan and Bou Ammar, Haitham},
year = {2022},
month = {07},
pages = {},
title = {HEBO: Pushing The Limits of Sample-Efficient Hyperparameter Optimisation},
volume = {74},
journal = {Journal of Artificial Intelligence Research}
}

@inproceedings{dong-etal-2024-survey,
    title = "A Survey on In-context Learning",
    author = "Dong, Qingxiu  and
      Li, Lei  and
      Dai, Damai  and
      Zheng, Ce  and
      Ma, Jingyuan  and
      Li, Rui  and
      Xia, Heming  and
      Xu, Jingjing  and
      Wu, Zhiyong  and
      Chang, Baobao  and
      Sun, Xu  and
      Li, Lei  and
      Sui, Zhifang",
    editor = "Al-Onaizan, Yaser  and
      Bansal, Mohit  and
      Chen, Yun-Nung",
    booktitle = "Proceedings of the 2024 Conference on Empirical Methods in Natural Language Processing",
    month = nov,
    year = "2024",
    address = "Miami, Florida, USA",
    publisher = "Association for Computational Linguistics",
    url = "https://aclanthology.org/2024.emnlp-main.64/",
    doi = "10.18653/v1/2024.emnlp-main.64",
    pages = "1107--1128",
}

@inproceedings{Brown2020,
 author = {Brown, Tom and Mann, Benjamin and Ryder, Nick and Subbiah, Melanie and Kaplan, Jared D and Dhariwal, Prafulla and Neelakantan, Arvind and Shyam, Pranav and Sastry, Girish and Askell, Amanda and Agarwal, Sandhini and Herbert-Voss, Ariel and Krueger, Gretchen and Henighan, Tom and Child, Rewon and Ramesh, Aditya and Ziegler, Daniel and Wu, Jeffrey and Winter, Clemens and Hesse, Chris and Chen, Mark and Sigler, Eric and Litwin, Mateusz and Gray, Scott and Chess, Benjamin and Clark, Jack and Berner, Christopher and McCandlish, Sam and Radford, Alec and Sutskever, Ilya and Amodei, Dario},
 booktitle = {Advances in Neural Information Processing Systems},
 editor = {H. Larochelle and M. Ranzato and R. Hadsell and M.F. Balcan and H. Lin},
 pages = {1877--1901},
 publisher = {Curran Associates, Inc.},
 title = {Language Models are Few-Shot Learners},
 url = {https://proceedings.neurips.cc/paper_files/paper/2020/file/1457c0d6bfcb4967418bfb8ac142f64a-Paper.pdf},
 volume = {33},
 year = {2020}
}

@inproceedings{Wei2022,
author = {Wei, Jason and Wang, Xuezhi and Schuurmans, Dale and Bosma, Maarten and Ichter, Brian and Xia, Fei and Chi, Ed H. and Le, Quoc V. and Zhou, Denny},
title = {Chain-of-thought prompting elicits reasoning in large language models},
year = {2022},
isbn = {9781713871088},
publisher = {Curran Associates Inc.},
address = {Red Hook, NY, USA},
booktitle = {Proceedings of the 36th International Conference on Neural Information Processing Systems},
articleno = {1800},
numpages = {14},
location = {New Orleans, LA, USA},
series = {NIPS '22}
}

@misc{Zheng2023,
author = {Zheng, Mingkai and Su, Xiu and You, Shan and Wang, Fei and Qian, Chen and Xu, Chang and Albanie, Samuel},
year = {2023},
month = {04},
pages = {},
title = {Can GPT-4 Perform Neural Architecture Search?},
doi = {10.48550/arXiv.2304.10970}
}

@inproceedings{caruana2004ensemble,
author = {Caruana, Rich and Niculescu-Mizil, Alexandru and Crew, Geoff and Ksikes, Alex},
title = {Ensemble selection from libraries of models},
year = {2004},
isbn = {1581138385},
publisher = {Association for Computing Machinery},
address = {New York, NY, USA},
url = {https://doi.org/10.1145/1015330.1015432},
doi = {10.1145/1015330.1015432},
booktitle = {Proceedings of the Twenty-First International Conference on Machine Learning},
pages = {18},
location = {Banff, Alberta, Canada},
series = {ICML '04}
}

@misc{deepseekai2025deepseekr1incentivizingreasoningcapability,
      title={DeepSeek-R1: Incentivizing Reasoning Capability in LLMs via Reinforcement Learning}, 
      author={DeepSeek-AI and Daya Guo and Dejian Yang and Haowei Zhang and Junxiao Song and Ruoyu Zhang and Runxin Xu and Qihao Zhu and Shirong Ma and Peiyi Wang and Xiao Bi and Xiaokang Zhang and Xingkai Yu and Yu Wu and Z. F. Wu and Zhibin Gou and Zhihong Shao and Zhuoshu Li and Ziyi Gao and Aixin Liu and Bing Xue and Bingxuan Wang and Bochao Wu and Bei Feng and Chengda Lu and Chenggang Zhao and Chengqi Deng and Chenyu Zhang and Chong Ruan and Damai Dai and Deli Chen and Dongjie Ji and Erhang Li and Fangyun Lin and Fucong Dai and Fuli Luo and Guangbo Hao and Guanting Chen and Guowei Li and H. Zhang and Han Bao and Hanwei Xu and Haocheng Wang and Honghui Ding and Huajian Xin and Huazuo Gao and Hui Qu and Hui Li and Jianzhong Guo and Jiashi Li and Jiawei Wang and Jingchang Chen and Jingyang Yuan and Junjie Qiu and Junlong Li and J. L. Cai and Jiaqi Ni and Jian Liang and Jin Chen and Kai Dong and Kai Hu and Kaige Gao and Kang Guan and Kexin Huang and Kuai Yu and Lean Wang and Lecong Zhang and Liang Zhao and Litong Wang and Liyue Zhang and Lei Xu and Leyi Xia and Mingchuan Zhang and Minghua Zhang and Minghui Tang and Meng Li and Miaojun Wang and Mingming Li and Ning Tian and Panpan Huang and Peng Zhang and Qiancheng Wang and Qinyu Chen and Qiushi Du and Ruiqi Ge and Ruisong Zhang and Ruizhe Pan and Runji Wang and R. J. Chen and R. L. Jin and Ruyi Chen and Shanghao Lu and Shangyan Zhou and Shanhuang Chen and Shengfeng Ye and Shiyu Wang and Shuiping Yu and Shunfeng Zhou and Shuting Pan and S. S. Li and Shuang Zhou and Shaoqing Wu and Shengfeng Ye and Tao Yun and Tian Pei and Tianyu Sun and T. Wang and Wangding Zeng and Wanjia Zhao and Wen Liu and Wenfeng Liang and Wenjun Gao and Wenqin Yu and Wentao Zhang and W. L. Xiao and Wei An and Xiaodong Liu and Xiaohan Wang and Xiaokang Chen and Xiaotao Nie and Xin Cheng and Xin Liu and Xin Xie and Xingchao Liu and Xinyu Yang and Xinyuan Li and Xuecheng Su and Xuheng Lin and X. Q. Li and Xiangyue Jin and Xiaojin Shen and Xiaosha Chen and Xiaowen Sun and Xiaoxiang Wang and Xinnan Song and Xinyi Zhou and Xianzu Wang and Xinxia Shan and Y. K. Li and Y. Q. Wang and Y. X. Wei and Yang Zhang and Yanhong Xu and Yao Li and Yao Zhao and Yaofeng Sun and Yaohui Wang and Yi Yu and Yichao Zhang and Yifan Shi and Yiliang Xiong and Ying He and Yishi Piao and Yisong Wang and Yixuan Tan and Yiyang Ma and Yiyuan Liu and Yongqiang Guo and Yuan Ou and Yuduan Wang and Yue Gong and Yuheng Zou and Yujia He and Yunfan Xiong and Yuxiang Luo and Yuxiang You and Yuxuan Liu and Yuyang Zhou and Y. X. Zhu and Yanhong Xu and Yanping Huang and Yaohui Li and Yi Zheng and Yuchen Zhu and Yunxian Ma and Ying Tang and Yukun Zha and Yuting Yan and Z. Z. Ren and Zehui Ren and Zhangli Sha and Zhe Fu and Zhean Xu and Zhenda Xie and Zhengyan Zhang and Zhewen Hao and Zhicheng Ma and Zhigang Yan and Zhiyu Wu and Zihui Gu and Zijia Zhu and Zijun Liu and Zilin Li and Ziwei Xie and Ziyang Song and Zizheng Pan and Zhen Huang and Zhipeng Xu and Zhongyu Zhang and Zhen Zhang},
      year={2025},
      eprint={2501.12948},
      archivePrefix={arXiv},
      primaryClass={cs.CL},
      url={https://arxiv.org/abs/2501.12948}, 
}

@inproceedings{Chen2016,
 author = {Chen, Tianqi and Guestrin, Carlos},
 title = {{XGBoost}: A Scalable Tree Boosting System},
 booktitle = {Proceedings of the 22nd ACM SIGKDD International Conference on Knowledge Discovery and Data Mining},
 series = {KDD '16},
 year = {2016},
 isbn = {978-1-4503-4232-2},
 location = {San Francisco, California, USA},
 pages = {785--794},
 numpages = {10},
 url = {http://doi.acm.org/10.1145/2939672.2939785},
 doi = {10.1145/2939672.2939785},
 acmid = {2939785},
 publisher = {ACM},
 address = {New York, NY, USA},
 keywords = {large-scale machine learning},
}

@misc{balef2025cashbanditsmaxkarmed,
      title={Put CASH on Bandits: A Max K-Armed Problem for Automated Machine Learning}, 
      author={Amir Rezaei Balef and Claire Vernade and Katharina Eggensperger},
      year={2025},
      eprint={2505.05226},
      archivePrefix={arXiv},
      primaryClass={cs.LG},
      url={https://arxiv.org/abs/2505.05226}, 
}

@article{Pedregosa2011,
  title={Scikit-learn: Machine learning in {P}ython},
  author={Pedregosa, Fabian and Varoquaux, Ga{\"e}l and Gramfort, Alexandre and Michel, Vincent and Thirion, Bertrand and Grisel, Olivier and Blondel, Mathieu and Prettenhofer, Peter and Weiss, Ron and Dubourg, Vincent and others},
  journal={Journal of machine learning research},
  volume={12},
  number={Oct},
  pages={2825--2830},
  year={2011}
}

@inproceedings{Prokhorenkova2018,
  author = {Prokhorenkova, Liudmila Ostroumova and Gusev, Gleb and Vorobev, Aleksandr and Dorogush, Anna Veronika and Gulin, Andrey},
  booktitle = {NeurIPS},
  editor = {Bengio, Samy and Wallach, Hanna M. and Larochelle, Hugo and Grauman, Kristen and Cesa-Bianchi, Nicolò and Garnett, Roman},
  ee = {http://papers.nips.cc/paper/7898-catboost-unbiased-boosting-with-categorical-features},
  pages = {6639-6649},
  title = {CatBoost: unbiased boosting with categorical features.},
  url = {http://dblp.uni-trier.de/db/conf/nips/nips2018.html#ProkhorenkovaGV18},
  year = 2018
}

@inproceedings{
coda-forno2023metaincontext,
title={Meta-in-context learning in large language models},
author={Julian Coda-Forno and Marcel Binz and Zeynep Akata and Matthew Botvinick and Jane X Wang and Eric Schulz},
booktitle={Thirty-seventh Conference on Neural Information Processing Systems},
year={2023},
url={https://openreview.net/forum?id=sx0xpaO0za}
}

@misc{kirsch2024generalpurposeincontextlearningmetalearning,
      title={General-Purpose In-Context Learning by Meta-Learning Transformers}, 
      author={Louis Kirsch and James Harrison and Jascha Sohl-Dickstein and Luke Metz},
      year={2024},
      eprint={2212.04458},
      archivePrefix={arXiv},
      primaryClass={cs.LG},
      url={https://arxiv.org/abs/2212.04458}, 
}

@inproceedings{
hollmann2023large,
title={Large Language Models for Automated Data Science: Introducing {CAAFE} for Context-Aware Automated Feature Engineering},
author={Noah Hollmann and Samuel M{\"u}ller and Frank Hutter},
booktitle={Thirty-seventh Conference on Neural Information Processing Systems},
year={2023},
url={https://openreview.net/forum?id=9WSxQZ9mG7}
}

@misc{zhang2023automlgptautomaticmachinelearning,
      title={AutoML-GPT: Automatic Machine Learning with GPT}, 
      author={Shujian Zhang and Chengyue Gong and Lemeng Wu and Xingchao Liu and Mingyuan Zhou},
      year={2023},
      eprint={2305.02499},
      archivePrefix={arXiv},
      primaryClass={cs.CL},
      url={https://arxiv.org/abs/2305.02499}, 
}

@misc{qwen2025qwen25technicalreport,
      title={Qwen2.5 Technical Report}, 
      author={Qwen and : and An Yang and Baosong Yang and Beichen Zhang and Binyuan Hui and Bo Zheng and Bowen Yu and Chengyuan Li and Dayiheng Liu and Fei Huang and Haoran Wei and Huan Lin and Jian Yang and Jianhong Tu and Jianwei Zhang and Jianxin Yang and Jiaxi Yang and Jingren Zhou and Junyang Lin and Kai Dang and Keming Lu and Keqin Bao and Kexin Yang and Le Yu and Mei Li and Mingfeng Xue and Pei Zhang and Qin Zhu and Rui Men and Runji Lin and Tianhao Li and Tianyi Tang and Tingyu Xia and Xingzhang Ren and Xuancheng Ren and Yang Fan and Yang Su and Yichang Zhang and Yu Wan and Yuqiong Liu and Zeyu Cui and Zhenru Zhang and Zihan Qiu},
      year={2025},
      eprint={2412.15115},
      archivePrefix={arXiv},
      primaryClass={cs.CL},
      url={https://arxiv.org/abs/2412.15115}, 
}

@inproceedings{wang-etal-2023-primacy,
    title = "Primacy Effect of {C}hat{GPT}",
    author = "Wang, Yiwei  and
      Cai, Yujun  and
      Chen, Muhao  and
      Liang, Yuxuan  and
      Hooi, Bryan",
    editor = "Bouamor, Houda  and
      Pino, Juan  and
      Bali, Kalika",
    booktitle = "Proceedings of the 2023 Conference on Empirical Methods in Natural Language Processing",
    month = dec,
    year = "2023",
    address = "Singapore",
    publisher = "Association for Computational Linguistics",
    url = "https://aclanthology.org/2023.emnlp-main.8/",
    doi = "10.18653/v1/2023.emnlp-main.8",
    pages = "108--115",
}

@inproceedings{
wang2025eliminating,
title={Eliminating Position Bias of Language Models: A Mechanistic Approach},
author={Ziqi Wang and Hanlin Zhang and Xiner Li and Kuan-Hao Huang and Chi Han and Shuiwang Ji and Sham M. Kakade and Hao Peng and Heng Ji},
booktitle={The Thirteenth International Conference on Learning Representations},
year={2025},
url={https://openreview.net/forum?id=fvkElsJOsN}
}

\appendix
\newpage

\begin{center}
    {\LARGE \bfseries Appendix}
\end{center}
\vspace{1em}

\startcontents[appendix]

\renewcommand*\contentsname{\Large Table of Contents}

\printcontents[appendix]{}{1}{\setcounter{tocdepth}{2}}
\newpage
\section{Synthetic Ridge Experiment}
\label{app:synthetic}

\subsection{Closed-Form Test Error}
\label{appendix:rmt}
\paragraph*{Notation.}
Throughout the appendix, bold uppercase letters (e.g.\ $\mathbf{A}$) denote matrices, 
bold lowercase letters (e.g.\ $\mathbf{x}$) denote vectors, 
and plain lowercase letters (e.g.\ $x$) denote scalars. 
We use $\|\mathbf{x}\|_2$ for the Euclidean norm of a vector, 
$\|\mathbf{A}\|$ for the spectral (operator) norm of a matrix, 
and $\|\mathbf{A}\|_F$ for its Frobenius norm. 
For two sequences of real numbers $u_n$ and $v_n$, 
the notation $u_n=O(v_n)$ indicates that $|u_n/v_n|$ remains bounded 
(as $n\to\infty$), typically with high probability. 
Expectation is denoted by $\mathbb{E}[\cdot]$.

\paragraph{Setup.} 
We consider a \emph{binary classification} problem in \(d\)-dimensional space. 
Fix a dimension \(d\in\mathbb{N}\). 
We observe a labeled training sample 
\[
\bigl\{(\mathbf{x}_i,y_i)\bigr\}_{i=1}^n,
\]
where for each \(i=1,\dots,n\):
\begin{itemize}
  \item \(\mathbf{x}_i\in\mathbb{R}^d\) is the \(d\)-dimensional feature vector,
  \item \(y_i\in\{+1,-1\}\) is the corresponding class label.
\end{itemize}

We assume that the data come from a mixture of two Gaussian classes:
\[
\mathbf{x}\,\big|\,y=+1 \sim \mathcal{N}(\mathbf{\mu}_1,\mathbf{\Sigma}_1),
\qquad
\mathbf{x}\,\big|\,y=-1 \sim \mathcal{N}(\mathbf{\mu}_2,\mathbf{\Sigma}_2),
\]
Let \(n_k\) be the number of training samples from class \(k\in\{1,2\}\), \(n=n_1+n_2\). Define class proportions \(c_k:=n_k/n\). Denote
\[
\mathbf{C}_k := \mathbf{\Sigma}_k + \mathbf{\mu}_k\mathbf{\mu}_k^{\!\top}\in\mathbb R^{d\times d},\qquad k=1,2.
\]
\paragraph*{Ridge regression classifier.}
Given the training set $\{(\mathbf{x}_i,y_i)\}_{i=1}^n$ with 
$\mathbf{x}_i\in\mathbb{R}^d$ and $y_i\in\{+1,-1\}$, 
we train a \emph{ridge regression classifier} (least-squares with $\ell_2$ penalty). 
Specifically, for a regularization parameter $\lambda>0$ we solve
\begin{equation}\label{eq:ridge}
\widehat{\mathbf{w}}(\lambda)
\;=\;
\arg\min_{\mathbf{w}\in\mathbb{R}^d}\;
\frac{1}{n}\sum_{i=1}^n \bigl(y_i - \mathbf{w}^\top\mathbf{x}_i\bigr)^2
+\lambda \|\mathbf{w}\|_2^2.
\end{equation}
This is the standard ridge regression problem. Its closed-form solution is
\begin{equation}\label{eq:ridge-solution}
\widehat{\mathbf{w}}(\lambda)
\;=\;
\bigl(\mathbf{X}^\top \mathbf{X}/n+\lambda \mathbf{I}_d\bigr)^{-1}
\mathbf{X}^\top \mathbf{y}/n,
\end{equation}
where $\mathbf{X}\in\mathbb{R}^{n\times d}$ is the data matrix with rows 
$\mathbf{x}_i^\top$ and $\mathbf{y}=(y_1,\dots,y_n)^\top$ the label vector.

Given a new test point $\mathbf{x}\in\mathbb{R}^d$, the classifier computes the \emph{score}
\begin{equation}\label{eq:score}
s(\mathbf{x})
\;=\;\widehat{\mathbf{w}}(\lambda)^\top \mathbf{x},
\end{equation}
which is then compared to a decision threshold (e.g.\ zero or an optimally chosen $\eta^\star$) to produce a predicted label.

All formulas below are deterministic equivalents / asymptotic formulas obtained by the standard Gaussian and random-matrix approximations used to derive fixed point equations.

\vspace{2mm}
\begin{assumption}
(Regularity / high-dimensional regime) The feature dimension \(d\) and the sample sizes \(n_k\) grow so that: \(d,n_1,n_2\to\infty\) with \(d/n \to \gamma\in(0,\infty)\) and \(c_k=n_k/n\to\bar c_k\in(0,\infty)\). The family of pair \((\mathbf{\Sigma}_1,\mathbf{\Sigma}_2)\) is uniformly bounded in operator norm and their empirical spectral distributions admit limits.
\end{assumption}

\paragraph{Auxiliary fixed point definitions.}
For a given \(\lambda>0\) we seek \(\mathbf{\delta}=(\delta_1,\delta_2)\in\mathbb R^2\) and a matrix \(\bar{\mathbf{Q}}(\lambda)\in\mathbb R^{d\times d}\) defined implicitly by the equations
\begin{align}
\bar{\mathbf{Q}}(\lambda) &:= \left(\sum_{k=1}^2 \frac{c_k}{1+\delta_k}\,\mathbf{C}_k + \lambda \mathbf{I}_d\right)^{-1}, \label{eq:Qbar}\\
\delta_k &= \frac{1}{n}\operatorname{tr}\!\big( \mathbf{C}_k \, \bar{\mathbf{Q}}(\lambda)\big),\qquad k=1,2. \label{eq:delta}
\end{align}
The existence and uniqueness of a positive solution follow under the above regularity conditions; numerically \(\delta\) is found by simple fixed-point iteration.

Define the diagonal scaling matrix and the (scaled) mean matrix
\[
\mathcal{D}_\delta := \operatorname{diag}\!\big(\tfrac{1}{1+\delta_1},\tfrac{1}{1+\delta_2}\big),\qquad 
\mathbf{M}_\delta := \mathcal{D}_\delta \begin{bmatrix}\mathbf{\mu}_1^\top\\[2pt]\mathbf{\mu}_2^\top\end{bmatrix}\in\mathbb R^{2\times d}.
\]

We also define two \(d\times d\) matrices \(\mathbf{K}_1,\mathbf{K}_2\) and two \(2\)-vectors \(d^{(1)},d^{(2)}\) through the linear algebraic operations below:
\begin{align*}
& \mathbf{V} := \frac{1}{n} \begin{bmatrix}
\operatorname{tr}(\mathbf{C}_1 \bar{\mathbf{Q}} \mathbf{C}_1 \bar{\mathbf{Q}}) & \operatorname{tr}(\mathbf{C}_1 \bar{\mathbf{Q}} \mathbf{C}_2 \bar{\mathbf{Q}})\\[2pt]
\operatorname{tr}(\mathbf{C}_2 \bar{\mathbf{Q}} \mathbf{C}_1 \bar{\mathbf{Q}}) & \operatorname{tr}(\mathbf{C}_2 \bar{\mathbf{Q}} \mathbf{C}_2 \bar{\mathbf{Q}})
\end{bmatrix}, \qquad
\mathbf{A} := \operatorname{diag}\!\Big(\frac{c_1}{(1+\delta_1)^2},\frac{c_2}{(1+\delta_2)^2}\Big),\\[4pt]
& \mathbf{t}^{(j)} := \begin{bmatrix}
\frac{1}{n}\operatorname{tr}(\mathbf{C}_1 \bar{\mathbf{Q}} \mathbf{\Sigma}_j \bar{\mathbf{Q}})\\[2pt]
\frac{1}{n}\operatorname{tr}(\mathbf{C}_2 \bar{\mathbf{Q}} \mathbf{\Sigma}_j \bar{\mathbf{Q}})
\end{bmatrix},\qquad j=1,2,\\[4pt]
& \mathbf{d}^{(j)} := (\mathbf{I}_2 - \mathbf{V} \mathbf{A})^{-1} \mathbf{t}^{(j)},\qquad j=1,2,
\end{align*}
and then
\[
\mathbf{K}_j := \bar{\mathbf{Q}}\,\mathbf{\Sigma}_j\,\bar{\mathbf{Q}}
\;+\;\frac{c_2 d^{(j)}_1}{(1+\delta_1)^2} \bar{\mathbf{Q}} \mathbf{C}_1 \bar{\mathbf{Q}}
\;+\;\frac{c_1 d^{(j)}_2}{(1+\delta_2)^2} \bar{\mathbf{Q}} \mathbf{C}_2 \bar{\mathbf{Q}},\qquad j=1,2 .
\]
\vspace{2mm}
Define the asymptotic (deterministic) class scores' means and variances as follows. 

Let \(\mathbf{y}\) be the vector of training labels with entries \(+1\) for class 1 samples and \(-1\) for class 2 samples, and write \(\mathbf{J}\in\mathbb R^{n\times 2}\) for the class indicator matrix with columns equal to the indicators of class membership. Then the limiting (deterministic) score means are
\[
m_k \;=\; \frac{1}{n}\, \mathbf{y}^\top \mathbf{J} \,\mathbf{M}_\delta \,\bar{\mathbf{Q}} \,\mathbf{\mu}_k,\qquad k=1,2,
\]
and the limiting score variances are
\[
v_k \;=\; \frac{1}{n^2}\Big( \mathbf{y}^\top \mathbf{V}^{(k)} \mathbf{y} + \mathbf{y}^\top \mathbf{J} \mathbf{M}_\delta \mathbf{K}_k \mathbf{M}_\delta^\top \mathbf{J}^\top \mathbf{y}
    - 2\, \mathbf{y}^\top \mathbf{J} \mathbf{M}_{\delta,\Delta}^{(k)} \bar{\mathbf{Q}} \mathbf{M}_\delta^\top \mathbf{J}^\top \mathbf{y}\Big),
\]
where \(\mathbf{V}^{(k)}\) is the diagonal matrix whose entries are the per-sample variances built from \(\operatorname{tr}(\mathbf{\Sigma}_i \mathbf{K}_k)/(1+\delta_i)^2\), and \(\mathbf{M}_{\delta,\Delta}^{(k)}\) is the matrix built from the traces \(\operatorname{tr}(\mathbf{\Sigma}_i \mathbf{K}_k)\).
\bigskip

\begin{theorem}[Asymptotic Gaussianity and deterministic test error]
\label{thm:ridge_mean}
Under the assumptions above, for any fixed regularization \(\lambda>0\) the distribution of the ridge score \(s(\mathbf{x})=\widehat{\mathbf{w}}(\lambda)^\top \mathbf{x}\) conditional on \(\mathbf{x}\) belonging to class \(k\) converges in distribution to a Gaussian with mean \(m_k\) and variance \(v_k\) as \(d,n\to\infty\). That is,
\[
s(\mathbf{x})\mid (\mathbf{x}\sim\text{class }k)\ \xrightarrow{d}\ \mathcal N(m_k,v_k),\qquad k=1,2,
\]
where \(m_k,v_k\) are given by the deterministic formulas above (they are computed from the unique solution of the fixed point system \eqref{eq:Qbar}--\eqref{eq:delta} together with the algebraic definitions of \(\mathbf{K}_k\)).  

Consequently, the asymptotic test error (balanced between the two classes) for the optimal threshold \(\eta^\star\) that minimizes the misclassification probability equals
\[
\mathcal E(\lambda) \;=\; \tfrac12\,\Phi\!\bigg(\frac{\eta^\star-m_{\max}}{\sqrt{v_{\max}}}\bigg)
\;+\;\tfrac12\Big(1-\Phi\!\bigg(\frac{\eta^\star-m_{\min}}{\sqrt{v_{\min}}}\bigg)\Big),
\]
where \(m_{\max}=\max\{m_1,m_2\}\), \(m_{\min}=\min\{m_1,m_2\}\), and \(v_{\max},v_{\min}\) are the variances corresponding to those means. The optimal threshold \(\eta^\star\) is the solution of
\[
\frac{1}{\sqrt{v_1}}\,( \eta - m_1) = \pm \frac{1}{\sqrt{v_2}}\,( \eta - m_2),
\]
\end{theorem}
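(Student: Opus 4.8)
\section*{Proof proposal}

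The plan is to derive the theorem from the deterministic-equivalent machinery of random matrix theory applied to the resolvent $\mathbf{Q}(\lambda) := (\mathbf{X}^\top\mathbf{X}/n + \lambda\mathbf{I}_d)^{-1}$, of which $\bar{\mathbf{Q}}(\lambda)$ in \eqref{eq:Qbar} is the deterministic counterpart, and to reduce the asymptotic Gaussianity to the concentration of two scalar random quantities. The first observation is that Gaussianity is essentially automatic: a test point from class $k$ may be written $\mathbf{x} = \mathbf{\mu}_k + \mathbf{\Sigma}_k^{1/2}\mathbf{z}$ with $\mathbf{z}\sim\mathcal{N}(\vzero,\mathbf{I}_d)$ drawn independently of the training sample, so conditionally on the training data the score $s(\mathbf{x}) = \widehat{\mathbf{w}}(\lambda)^\top\mathbf{x}$ is exactly Gaussian, with (random) mean $\widehat{\mathbf{w}}^\top\mathbf{\mu}_k$ and (random) variance $\widehat{\mathbf{w}}^\top\mathbf{\Sigma}_k\widehat{\mathbf{w}}$. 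It therefore suffices to prove that these two quantities concentrate, as $d,n\to\infty$, around the deterministic values $m_k$ and $v_k$; a Slutsky argument then yields $s(\mathbf{x})\mid(\mathbf{x}\sim\text{class }k)\xrightarrow{d}\mathcal{N}(m_k,v_k)$.

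Writing $\widehat{\mathbf{w}} = \tfrac{1}{n}\mathbf{Q}\mathbf{X}^\top\mathbf{y}$ from \eqref{eq:ridge-solution}, the conditional mean is the linear form $\widehat{\mathbf{w}}^\top\mathbf{\mu}_k = \tfrac{1}{n}\mathbf{y}^\top\mathbf{X}\mathbf{Q}\mathbf{\mu}_k$. I would establish its limit by invoking the resolvent equivalent $\mathbf{Q}\leftrightarrow\bar{\mathbf{Q}}$, obtained through a leave-one-sample-out expansion together with the Sherman--Morrison identity; this is exactly what produces the self-consistent corrections $\delta_k = \tfrac{1}{n}\operatorname{tr}(\mathbf{C}_k\bar{\mathbf{Q}})$ of equations~(\ref{eq:Qbar})--(\ref{eq:delta}). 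Combining this with the deterministic limit of $\tfrac{1}{n}\mathbf{X}^\top\mathbf{y}$ (whose class-weighted expectation is $c_1\mathbf{\mu}_1 - c_2\mathbf{\mu}_2$) and the class-indicator bookkeeping encoded by $\mathbf{J}$ and the scaled mean matrix $\mathbf{M}_\delta$, one recovers $m_k = \tfrac{1}{n}\mathbf{y}^\top\mathbf{J}\mathbf{M}_\delta\bar{\mathbf{Q}}\mathbf{\mu}_k$.

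The technical core is the conditional variance $\widehat{\mathbf{w}}^\top\mathbf{\Sigma}_k\widehat{\mathbf{w}} = \tfrac{1}{n^2}\mathbf{y}^\top\mathbf{X}\,\mathbf{Q}\mathbf{\Sigma}_k\mathbf{Q}\,\mathbf{X}^\top\mathbf{y}$, which involves the \emph{squared resolvent} $\mathbf{Q}\mathbf{\Sigma}_k\mathbf{Q}$. The main obstacle is precisely here: one cannot simply replace each $\mathbf{Q}$ by $\bar{\mathbf{Q}}$, because the two resolvents are correlated and their first-order fluctuations contribute at the same order. Controlling these fluctuations forces the introduction of the matrix $\mathbf{V}$ (traces of products $\mathbf{C}_i\bar{\mathbf{Q}}\mathbf{C}_j\bar{\mathbf{Q}}$), the diagonal weighting $\mathbf{A}$, and the resolution of the $2\times 2$ linear system $\mathbf{d}^{(j)} = (\mathbf{I}_2 - \mathbf{V}\mathbf{A})^{-1}\mathbf{t}^{(j)}$, in which $(\mathbf{I}_2-\mathbf{V}\mathbf{A})^{-1}$ resums the geometric series of fluctuation corrections and yields the corrected kernel $\mathbf{K}_k = \bar{\mathbf{Q}}\mathbf{\Sigma}_k\bar{\mathbf{Q}} + (\text{weighted }\bar{\mathbf{Q}}\mathbf{C}_i\bar{\mathbf{Q}}\text{ corrections})$. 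I expect the delicate points to be (i) proving invertibility of $\mathbf{I}_2-\mathbf{V}\mathbf{A}$ under the regularity assumption, and (ii) handling the cross terms between the outer $\mathbf{y}^\top\mathbf{X}$ factors and the inner kernel, which give rise to the three contributions appearing in $v_k$: the per-sample variance term $\mathbf{y}^\top\mathbf{V}^{(k)}\mathbf{y}$, the kernel term through $\mathbf{K}_k$, and the cross term through $\mathbf{M}_{\delta,\Delta}^{(k)}$.

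With $s(\mathbf{x})\mid\text{class }k\xrightarrow{d}\mathcal{N}(m_k,v_k)$ established, the test-error and threshold statements are elementary. A threshold rule misclassifies the higher-mean class when its score falls below $\eta$ and the lower-mean class when its score lies above $\eta$, so the balanced error is $\tfrac12\Phi\!\big((\eta-m_{\max})/\sqrt{v_{\max}}\big) + \tfrac12\big(1-\Phi\!\big((\eta-m_{\min})/\sqrt{v_{\min}}\big)\big)$, matching the stated $\mathcal{E}(\lambda)$. Finally, setting $d\mathcal{E}/d\eta = 0$ amounts to equating the two Gaussian densities at $\eta$; matching the quadratic exponents of $\phi((\eta-m_1)/\sqrt{v_1})$ and $\phi((\eta-m_2)/\sqrt{v_2})$ reduces this to the linear relation $\tfrac{1}{\sqrt{v_1}}(\eta-m_1) = \pm\tfrac{1}{\sqrt{v_2}}(\eta-m_2)$, the two signs corresponding to the two crossing points of the density pair.
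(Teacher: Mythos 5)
Your proposal is correct, and its overall scaffolding coincides with the paper's proof sketch: first-order deterministic equivalents and the fixed-point system \eqref{eq:Qbar}--\eqref{eq:delta} to obtain $m_k$, then second-order equivalents --- the matrix $\mathbf{V}$, the weighting $\mathbf{A}$, the resummation $(\mathbf{I}_2-\mathbf{V}\mathbf{A})^{-1}$, and the corrected kernel $\mathbf{K}_k$ --- to obtain $v_k$, followed by the elementary Gaussian computation of the balanced error and of the threshold equation by equating the two class-conditional densities. The one genuine difference is how you reach Gaussianity: the paper's sketch centers the score and invokes a multivariate CLT for linear/quadratic forms of Gaussian vectors (its second ingredient, following the cited fluctuation results), whereas you observe that, since the test point $\mathbf{x}=\boldsymbol{\mu}_k+\boldsymbol{\Sigma}_k^{1/2}\mathbf{z}$ is independent of the training sample, the score is \emph{exactly} Gaussian conditionally on the training data, so the entire problem reduces to concentration of the two scalars $\widehat{\mathbf{w}}^\top\boldsymbol{\mu}_k$ and $\widehat{\mathbf{w}}^\top\boldsymbol{\Sigma}_k\widehat{\mathbf{w}}$ plus a Slutsky argument. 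Your route is the more elementary and cleanly localizes all the random-matrix work in the quadratic form $\tfrac{1}{n^2}\mathbf{y}^\top\mathbf{X}\mathbf{Q}\boldsymbol{\Sigma}_k\mathbf{Q}\mathbf{X}^\top\mathbf{y}$ involving the squared resolvent --- you correctly flag that naive substitution $\mathbf{Q}\mapsto\bar{\mathbf{Q}}$ fails there because correlated first-order fluctuations contribute at leading order, which is exactly what the $\mathbf{d}^{(j)}$ corrections repair; the paper's CLT framing, by contrast, handles the fluctuations of all randomness uniformly and would extend beyond exactly Gaussian test points. Your identified open technical points (invertibility of $\mathbf{I}_2-\mathbf{V}\mathbf{A}$ under the regularity assumption, and the bookkeeping of the cross terms producing the three contributions $\mathbf{y}^\top\mathbf{V}^{(k)}\mathbf{y}$, the $\mathbf{K}_k$ term, and the $\mathbf{M}_{\delta,\Delta}^{(k)}$ term in $v_k$) are real but standard, and the paper's own proof is likewise only a sketch that defers these details to the cited references, so your proposal is at a comparable level of rigor.
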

\begin{proof}[Proof sketch]
The proof is a combination of two standard ingredients:

\begin{enumerate}
\item \emph{Deterministic equivalents / resolvent fixed point.} Using standard random-matrix techniques (resolvent identities and deterministic equivalents for sample covariance resolvents) \cite[Chapter 6]{couillet2011random}, one shows that the random matrix inverse that appears in the ridge formula concentrates around the deterministic matrix \(\bar{\mathbf{Q}}(\lambda)\) defined in \eqref{eq:Qbar} and that the scalar traces \((1/n)\operatorname{tr}(\mathbf{C}_k \bar{\mathbf{Q}})\) converge to the solution \(\delta_k\) of \eqref{eq:delta}. This gives the first-order deterministic equivalents used to compute \(m_k\).
\item \emph{Gaussian fluctuation / CLT.} After centering by the deterministic mean, the score is a linear or quadratic form of Gaussian vectors; a multivariate CLT (together with second-order deterministic equivalents captured by \(\mathbf{K}_k\) and the \(\mathbf{d}^{(j)}\) corrections) yields asymptotic Gaussianity \cite{tiomoko2020large} with variance given by the deterministic formula \(v_k\).
\end{enumerate}
\end{proof}

\subsection{Task Generation Procedure}
\label{appendix:toy_gen}

We generate synthetic tasks $\mathcal{T}$ as binary Gaussian classification problems of dimension $d=2$. 
\[
\mathcal{T} = \bigl(n_1, n_2, \mu_1, \mu_2, \alpha_1, \alpha_2 \bigr),
\]
with the following components:
\begin{itemize}
    \item $n_1, n_2$: sample counts for classes $1$ and $2$, drawn uniformly at random from $\{10,\dots,500\}$.
    \item $\mu_1, \mu_2$: mean vectors of the two classes. We fix
    \[
    \mu_1 = (1,1,\dots,1) \in \mathbb{R}^d, 
    \]
    and define
    \[
    \mu_2 = -\varepsilon \cdot (1,1,\dots,1),
    \]
    where $\varepsilon \in [0,2]^2$ is sampled i.i.d.\ from the uniform distribution and rounded to two decimal places.
    \item $\alpha_1, \alpha_2$: AR(1) Toeplitz correlation coefficients, drawn uniformly from $[0,0.9]$ (rounded to two decimal places). These define the covariance matrices
    \[
    \Sigma^{(c)}_{ij} = \alpha_c^{\,|i-j|}, \qquad c \in \{1,2\}.
    \]
\end{itemize}

Hence, each task $\mathcal{T}$ specifies two Gaussian distributions
\[
X \,\big|\, Y=c \;\sim\; \mathcal{N}\!\left(\mu_c, \Sigma^{(c)}\right), 
\quad c \in \{1,2\},
\]
together with their respective sample sizes $n_c$. 

Because the class means, covariances, and sample sizes are randomized across tasks, the resulting problems differ in signal-to-noise ratio and feature correlations. Consequently, the optimal ridge regularization parameter $\lambda^\star$ varies substantially.

\subsection{Prompts}
\label{appendix:toy_prompts}
We query the LLM to select an optimal ridge penalty $\lambda$ from a fixed grid given JSON task metadata.
Two prompt variants are used: (i) a \emph{zero-shot} prompt with no past tasks, and (ii) a \emph{meta-informed} prompt with a list of past tasks annotated with their optimal $\lambda^\star$.

\begin{tcolorbox}[title=Zero-Shot Prompt,colback=gray!5,colframe=gray!80!black,listing only,listing options={basicstyle=\ttfamily\small,breaklines=true}]
You are a statistics assistant. Your task is to inspect a Gaussian classification
problem that will be solved with ridge regression and then pick the optimal
ridge regularisation constant lambda for this problem (task\_id: 0).
The task is a two-class Gaussian problem with: \\
  • n1, n2   : sample counts for classes 1 and 2; \\
  • mu1, mu2 : mean vectors of the two classes; \\
  • alpha1, alpha2 : AR(1) Toeplitz correlation coefficients
    defining each class's covariance Sigma\_ij = alpha \^{}$|i-j|$. \\
Choose lambda only from the common grid provided below.
\\
\\
\# Common lambda grid (shared by every task): \\
\\
\{\{LAMBDA\_GRID\_JSON\}\} \\
\\
\# Task (predict lambda\_star). Pick **exactly one** lambda from the common grid above that minimises
test error for this task. Output just that number, no extra text. \\
\\
\{\{NEW\_TASK\_JSON\}\}
\end{tcolorbox}

\begin{tcolorbox}[title=Meta-Informed Prompt,colback=gray!5,colframe=gray!80!black,listing only,listing options={basicstyle=\ttfamily\small,breaklines=true}]
You are a statistics assistant.Your task is to inspect several past Gaussian classification problems that were solved with ridge regression and then pick the optimal ridge regularisation constant lambda for ONE new problem (task\_id: NEW).
The task is a two-class Gaussian problem with: \\
  • n1, n2   : sample counts for classes 1 and 2; \\
  • mu1, mu2 : mean vectors of the two classes; \\
  • alpha1, alpha2 : AR(1) Toeplitz correlation coefficients
    defining each class's covariance Sigma\_ij = alpha \^{}$|i-j|$. \\
Choose lambda only from the common grid provided below.
\\
\\
\# Common lambda grid (shared by every task):\\

\{\{LAMBDA\_GRID\_JSON\}\} \\
\\
\# Past tasks with known optimal lambda\_star: \\

\{\{PAST\_TASKS\_JSON\}\} \\
\\
\# Task (predict lambda\_star). Pick **exactly one** lambda from the common grid above that minimises
test error for this task. Output just that number, no extra text. \\
\\
\{\{NEW\_TASK\_JSON\}\}

\end{tcolorbox}

\newpage
\subsection{Effect of Decoding Temperature}
\label{appendix:temperature}

We examined the impact of decoding temperature on regret across all LLMs. Temperatures
$T \in \{0.0, 0.2, 0.4, 0.6, 0.8\}$ were tested using the same protocol as in the main
experiments. Figure~\ref{fig:temperature_plots} reports the results.

Across all models, we observe that decoding temperature has only a marginal effect on regret with the confidence intervals overlapping substantially. This indicates
that regret is largely insensitive to sampling temperature, and thus our main results are robust to
this choice.
\begin{figure}[h]
    \centering
    \includegraphics[width=0.8\linewidth]{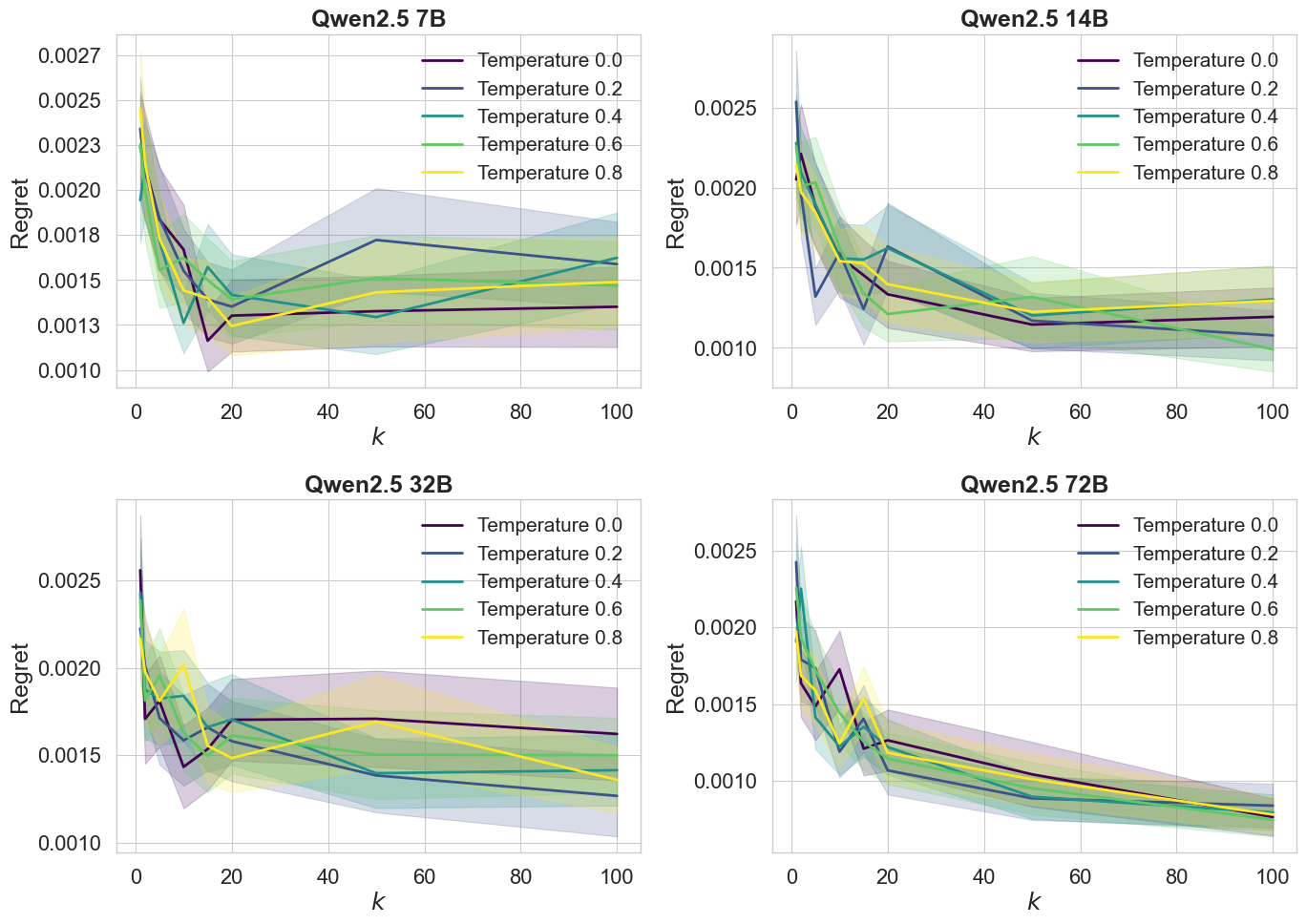}
    \caption{Regret vs. number of support tasks $k$ for Qwen 2.5 models at five decoding temperatures (T=0.0 to 0.8). Shaded regions denote 90\% confidence interval based on standard error across 1000 trials. Only the 72B model shows consistent improvement with increasing k, with minimal effect of temperature across all models.}
    \label{fig:temperature_plots}
\end{figure}

\newpage

\section{Kaggle Benchmark Details}
\label{app:details Challenges}
\subsection{Kaggle Challenges}

Table~\ref{tabChallenges} summarizes the statistics of the tabular challenges used in this paper, highlighting a wide range of problem types, metrics, and data sizes.

\begin{table*}[h]
    \centering
    \scriptsize
\begin{tabular}{|l|l|c|l|l|r|r|r|r|r|r|r|r|}
\hline
\textbf{Kaggle} & \textbf{type} & \textbf{year} & \textbf{pred} & \textbf{metric} & \textbf{\#} & \textbf{\#} & \textbf{\#} & \textbf{\#} & \textbf{\#} & \textbf{\#} & \textbf{\#} & \textbf{\#}\\
\textbf{challenge} & & & \textbf{type} & & \textbf{team} & \textbf{train} & \textbf{test} & \textbf{feat} & \textbf{cat} & \textbf{num} & \textbf{cls} & \textbf{miss}\\
\hline
\href{https://www.kaggle.com/competitions/playground-series-s4e4}{abalone} & play & 2024 & reg & rmsle & 2606 & 90615 & 60411 & 8 & 1 & 7 &  & 0 \\ 
\href{https://www.kaggle.com/competitions/allstate-claims-severity}{allstate} & feat & 2016 & reg & mae & 3045 & 188318 & 125546 & 130 & 116 & 14 &  & 0 \\ 
\href{https://www.kaggle.com/competitions/playground-series-s3e3}{attrition} & play & 2023 & bin & auc & 665 & 1677 & 1119 & 33 & 8 & 25 & 2 & 0 \\ 
\href{https://www.kaggle.com/competitions/playground-series-s3e14}{blueberry} & play & 2023 & reg & mae & 1875 & 15289 & 10194 & 16 & 0 & 16 &  & 0 \\
\href{https://www.kaggle.com/competitions/playground-series-s4e1}{churn} & play & 2024 & bin & auc & 3632 & 165034 & 110023 & 12 & 6 & 6 & 2 & 0 \\ 
\href{https://www.kaggle.com/competitions/playground-series-s3e26}{cirrhosis} & play & 2023 & mult & nll & 1661 & 7905 & 5271 & 18 & 6 & 12 & 3 & 0 \\ 
\href{https://www.kaggle.com/competitions/playground-series-s3e9}{concrete strength} & play & 2023 & reg & rmse & 765 & 5407 & 3605 & 8 & 0 & 8 &  & 0 \\ 
\href{https://www.kaggle.com/competitions/forest-cover-type-prediction}{covertype} & play & 2015 & mult & acc & 1692 & 15120 & 565892 & 54 & 44 & 10 & 7 & 0 \\ 
\href{https://www.kaggle.com/competitions/playground-series-s3e16}{crab age} & play & 2023 & reg & mae & 1429 & 74051 & 49368 & 8 & 1 & 7 &  & 0 \\ 
\href{https://www.kaggle.com/competitions/GiveMeSomeCredit}{credit fusion} & feat & 2011 & bin & auc & 924 & 150000 & 101503 & 10 & 0 & 10 & 2 & 56384 \\ 
\href{https://www.kaggle.com/competitions/tabular-playground-series-aug-2022}{failure} & play & 2022 & bin & auc & 1888 & 26570 & 20775 & 24 & 3 & 21 & 2 & 35982 \\ 
\href{https://www.kaggle.com/competitions/playground-series-s3e15}{heat flux fi} & play & 2023 & reg & rmse & 693 & 21229 & 10415 & 8 & 2 & 6 &  & 34603 \\ 
\href{https://www.kaggle.com/competitions/playground-series-s3e22}{horses} & play & 2023 & bin & f1 & 1541 & 1235 & 824 & 27 & 17 & 10 & 3 & 1324 \\ 
\href{https://www.kaggle.com/competitions/playground-series-s3e1}{housing california} & play & 2023 & reg & rmse & 689 & 37137 & 24759 & 8 & 0 & 8 &  & 0 \\ 
\href{https://www.kaggle.com/competitions/predict-who-is-more-influential-in-a-social-network}{influencers} & feat & 2013 & bin & auc & 132 & 5500 & 5952 & 22 & 0 & 22 & 2 & 0 \\ 
\href{https://www.kaggle.com/competitions/tabular-playground-series-feb-2021}{insurance} & play & 2021 & reg & rmse & 1433 & 300000 & 200000 & 24 & 10 & 14 &  & 0 \\ 
\href{https://www.kaggle.com/competitions/playground-series-s4e10}{loan approval} & play & 2024 & bin & auc & 3858 & 58645 & 39098 & 11 & 4 & 7 & 2 & 0 \\ 
\href{https://www.kaggle.com/competitions/playground-series-s3e11}{media} & play & 2023 & reg & rmsle & 952 & 360336 & 240224 & 15 & 7 & 8 &  & 0 \\ 
\href{https://www.kaggle.com/competitions/playground-series-s4e11}{mental health} & play & 2024 & bin & acc & 2685 & 140700 & 93800 & 18 & 7 & 8 & 2 & 718167 \\ 
\href{https://www.kaggle.com/competitions/mercedes-benz-greener-manufacturing}{mercedes} & feat & 2017 & reg & r2 & 3823 & 4209 & 4209 & 376 & 376 & 0 &  & 0 \\ 
\href{https://www.kaggle.com/competitions/bioresponse}{molecules} & feat & 2012 & bin & nll & 698 & 3751 & 2501 & 1776 & 0 & 1776 & 2 & 0 \\ 
\href{https://www.kaggle.com/competitions/tabular-playground-series-jan-2021}{unknown a} & play & 2021 & reg & rmse & 1728 & 300000 & 200000 & 14 & 0 & 14 &  & 0 \\ 
\hline
\end{tabular}
\caption{\textbf{Metadata of Kaggle challenges.} Challenge types include "playground" (datasets from external sources or synthetically generated) and "featured" (datasets from real scientific or industrial applications, often with significant monetary prizes for top participants). Prediction tasks are binary classification (bin), regression (reg), or multi-class classification (mult; with the number of classes indicated in the \#cls column). Note that in our method, \textit{mult} and \textit{bin} are treated the same. Features are categorized as numerical (num) or categorical (cat). The final column reports the number of missing entries in the training data.}

\label{tabChallenges}
\end{table*}

\subsection{Per-Challenge Results}
\label{app:detailled_discussion}

\begin{table*}[ht!]
    \centering
    {\scriptsize
\begin{tabular}{|l|c|c|c|c|c|c|}
\hline
\textbf{Kaggle} & \textbf{Meta} & \textbf{Zero} & \textbf{Context} & \textbf{Random} & \textbf{MaxUCB} & \textbf{LGBM} \\

\textbf{Challenge} & \textbf{-Informed} & \textbf{-Shot} & \textbf{-Random} & \textbf{-Hyperopt} & \textbf{-Hyperopt} & \textbf{-Hyperopt} \\
\hline
\href{https://www.kaggle.com/competitions/playground-series-s4e4}{abalone} & 85.73 $\pm$ 3.3 & 74.67 $\pm$ 4.6 & 87.87 $\pm$ 2.3 & 58.95 $\pm$ 4.6 & 56.53 $\pm$ 9.0 & 64.21 $\pm$ 11.3 \\
\href{https://www.kaggle.com/competitions/allstate-claims-severity}{allstate} & 69.92 $\pm$ 2.3 & 61.66 $\pm$ 2.9 & 65.41 $\pm$ 5.0 & 50.05 $\pm$ 2.4 & 56.25 $\pm$ 2.7 & 51.0 $\pm$ 2.7 \\
\href{https://www.kaggle.com/competitions/playground-series-s3e3}{attrition} & 59.51 $\pm$ 1.7 & 61.12 $\pm$ 1.8 & 57.31 $\pm$ 2.3 & 59.36 $\pm$ 3.3 & 58.69 $\pm$ 2.6 & 48.21 $\pm$ 5.0 \\
\href{https://www.kaggle.com/competitions/playground-series-s3e14}{blueberry} & 81.16 $\pm$ 2.4 & 79.86 $\pm$ 1.7 & 78.96 $\pm$ 3.8 & 70.77 $\pm$ 5.3 & 62.9 $\pm$ 7.1 & 65.87 $\pm$ 7.7 \\
\href{https://www.kaggle.com/competitions/playground-series-s4e1}{churn} & 70.35 $\pm$ 0.9 & 68.73 $\pm$ 0.9 & 68.71 $\pm$ 3.0 & 65.07 $\pm$ 4.0 & 62.98 $\pm$ 6.0 & 70.64 $\pm$ 1.0 \\
\href{https://www.kaggle.com/competitions/playground-series-s3e26}{cirrhosis} & 70.58 $\pm$ 3.6 & 69.09 $\pm$ 1.4 & 73.06 $\pm$ 1.8 & 64.61 $\pm$ 4.6 & 66.96 $\pm$ 1.9 & 70.17 $\pm$ 2.0 \\
\href{https://www.kaggle.com/competitions/playground-series-s3e9}{concrete strength} & 74.34 $\pm$ 17.9 & 74.19 $\pm$ 6.8 & 59.37 $\pm$ 16.1 & 88.81 $\pm$ 5.4 & 75.46 $\pm$ 13.8 & 83.21 $\pm$ 9.3 \\
\href{https://www.kaggle.com/competitions/forest-cover-type-prediction}{covertype} & 67.78 $\pm$ 4.0 & 58.35 $\pm$ 7.6 & 60.05 $\pm$ 10.3 & 56.75 $\pm$ 11.0 & 53.75 $\pm$ 6.2 & 32.0 $\pm$ 3.4 \\
\href{https://www.kaggle.com/competitions/playground-series-s3e16}{crab age} & 68.87 $\pm$ 0.7 & 68.81 $\pm$ 0.6 & 67.67 $\pm$ 1.2 & 61.84 $\pm$ 2.3 & 59.53 $\pm$ 3.2 & 63.84 $\pm$ 1.8 \\
\href{https://www.kaggle.com/competitions/GiveMeSomeCredit}{credit fusion} & 96.61 $\pm$ 1.0 & 96.71 $\pm$ 1.1 & 90.91 $\pm$ 1.7 & 96.35 $\pm$ 0.9 & 94.12 $\pm$ 1.8 & 96.75 $\pm$ 1.5 \\
\href{https://www.kaggle.com/competitions/tabular-playground-series-aug-2022}{failure} & 41.12 $\pm$ 1.5 & 43.52 $\pm$ 1.7 & 41.25 $\pm$ 0.8 & 43.7 $\pm$ 2.6 & 47.15 $\pm$ 5.0 & 48.15 $\pm$ 7.0 \\
\href{https://www.kaggle.com/competitions/playground-series-s3e15}{heat flux fi} & 93.4 $\pm$ 5.0 & 90.7 $\pm$ 4.3 & 83.65 $\pm$ 8.6 & 69.07 $\pm$ 6.6 & 47.37 $\pm$ 11.3 & 36.22 $\pm$ 17.1 \\
\href{https://www.kaggle.com/competitions/playground-series-s3e22}{horses} & 82.39 $\pm$ 7.7 & 82.78 $\pm$ 5.6 & 75.31 $\pm$ 10.6 & 81.15 $\pm$ 6.2 & 72.7 $\pm$ 9.2 & 79.75 $\pm$ 5.7 \\
\href{https://www.kaggle.com/competitions/playground-series-s3e1}{housing california} & 62.53 $\pm$ 0.6 & 54.84 $\pm$ 2.4 & 60.07 $\pm$ 2.0 & 46.9 $\pm$ 6.8 & 42.15 $\pm$ 8.2 & 52.71 $\pm$ 3.9 \\
\href{https://www.kaggle.com/competitions/predict-who-is-more-influential-in-a-social-network}{influencers} & 76.84 $\pm$ 7.4 & 83.55 $\pm$ 1.4 & 80.52 $\pm$ 2.8 & 82.95 $\pm$ 2.7 & 82.03 $\pm$ 3.0 & 87.45 $\pm$ 1.9 \\
\href{https://www.kaggle.com/competitions/tabular-playground-series-feb-2021}{insurance} & 74.68 $\pm$ 2.4 & 68.16 $\pm$ 1.8 & 67.9 $\pm$ 2.1 & 62.53 $\pm$ 5.9 & 66.76 $\pm$ 4.2 & 64.6 $\pm$ 3.4 \\
\href{https://www.kaggle.com/competitions/playground-series-s4e10}{loan approval} & 71.58 $\pm$ 2.6 & 63.29 $\pm$ 5.5 & 66.84 $\pm$ 5.4 & 62.64 $\pm$ 6.9 & 60.81 $\pm$ 4.8 & 74.43 $\pm$ 0.9 \\
\href{https://www.kaggle.com/competitions/playground-series-s3e11}{media} & 62.95 $\pm$ 1.4 & 57.52 $\pm$ 2.0 & 61.81 $\pm$ 2.5 & 49.5 $\pm$ 7.5 & 47.87 $\pm$ 5.6 & 26.07 $\pm$ 2.8 \\
\href{https://www.kaggle.com/competitions/playground-series-s4e11}{mental health} & 92.99 $\pm$ 3.0 & 79.77 $\pm$ 10.2 & 89.69 $\pm$ 5.2 & 75.34 $\pm$ 9.5 & 73.39 $\pm$ 9.3 & 80.11 $\pm$ 7.7 \\
\href{https://www.kaggle.com/competitions/mercedes-benz-greener-manufacturing}{mercedes} & 17.81 $\pm$ 2.8 & 36.44 $\pm$ 7.8 & 35.26 $\pm$ 10.6 & 36.57 $\pm$ 8.6 & 38.94 $\pm$ 4.7 & 25.42 $\pm$ 2.0 \\
\href{https://www.kaggle.com/competitions/bioresponse}{molecules} & 97.52 $\pm$ 1.5 & 96.34 $\pm$ 1.6 & 96.32 $\pm$ 3.3 & 96.33 $\pm$ 2.6 & 94.84 $\pm$ 1.9 & 78.02 $\pm$ 12.6 \\
\href{https://www.kaggle.com/competitions/tabular-playground-series-jan-2021}{unknown a} & 80.56 $\pm$ 0.8 & 78.6 $\pm$ 0.8 & 72.59 $\pm$ 2.4 & 66.17 $\pm$ 2.5 & 61.75 $\pm$ 6.0 & 61.41 $\pm$ 5.5 \\
\hline
\textbf{Mean} & \textbf{72.69 $\pm$ 0.2} & $70.39 \pm 0.2$ & $70.02 \pm 0.3$ & $65.7 \pm 1.1$ & $62.86 \pm 1.2$ &  $61.8 \pm 1.1 $ \\
\hline
\end{tabular}
}
\caption{Kaggle p-rank results across all challenges (the higher, the better). Uncertainty is reported as $\pm$ values, representing the 90\% confidence interval based on the standard error across 8 random seeds.}
\label{App:perchallengeresults}
\end{table*}

\begin{table*}[ht!]
\centering
\begin{tabular}{|l|c|}
\hline
\textbf{Kaggle Challenge} & \textbf{Context Blends}
\\
\hline
\href{https://www.kaggle.com/competitions/playground-series-s4e4}{abalone} & 92.06 $\pm$ 0.1 \\
\href{https://www.kaggle.com/competitions/allstate-claims-severity}{allstate} & 77.15 $\pm$ 0.7 \\
\href{https://www.kaggle.com/competitions/playground-series-s3e3}{attrition} & 57.47 $\pm$ 3.2 \\
\href{https://www.kaggle.com/competitions/playground-series-s3e14}{blueberry} & 88.65 $\pm$ 0.8 \\
\href{https://www.kaggle.com/competitions/playground-series-s4e1}{churn} & 71.48 $\pm$ 1.1 \\
\href{https://www.kaggle.com/competitions/playground-series-s3e26}{cirrhosis} & 83.62 $\pm$ 2.7 \\
\href{https://www.kaggle.com/competitions/playground-series-s3e9}{concrete strength} & 95.95 $\pm$ 2.8 \\
\href{https://www.kaggle.com/competitions/forest-cover-type-prediction}{covertype} & 77.16 $\pm$ 1.0 \\
\href{https://www.kaggle.com/competitions/playground-series-s3e16}{crab age} & 71.51 $\pm$ 0.2 \\
\href{https://www.kaggle.com/competitions/GiveMeSomeCredit}{credit fusion} & 97.93 $\pm$ 0.8 \\
\href{https://www.kaggle.com/competitions/tabular-playground-series-aug-2022}{failure} & 38.87 $\pm$ 2.9 \\
\href{https://www.kaggle.com/competitions/playground-series-s3e15}{heat flux fi} & 99.3 $\pm$ 0.1 \\
\href{https://www.kaggle.com/competitions/playground-series-s3e22}{horses} & 73.73 $\pm$ 12.0 \\
\href{https://www.kaggle.com/competitions/playground-series-s3e1}{housing california} & 71.57 $\pm$ 1.0 \\
\href{https://www.kaggle.com/competitions/predict-who-is-more-influential-in-a-social-network}{influencers} & 74.24 $\pm$ 1.9 \\
\href{https://www.kaggle.com/competitions/tabular-playground-series-feb-2021}{insurance} & 84.46 $\pm$ 6.5 \\
\href{https://www.kaggle.com/competitions/playground-series-s4e10}{loan approval} & 78.55 $\pm$ 0.9 \\
\href{https://www.kaggle.com/competitions/playground-series-s3e11}{media} & 72.0 $\pm$ 0.6 \\
\href{https://www.kaggle.com/competitions/playground-series-s4e11}{mental health} & 75.03 $\pm$ 5.2 \\
\href{https://www.kaggle.com/competitions/mercedes-benz-greener-manufacturing}{mercedes} & 59.43 $\pm$ 4.8 \\
\href{https://www.kaggle.com/competitions/bioresponse}{molecules} & 83.63 $\pm$ 12.2 \\
\href{https://www.kaggle.com/competitions/tabular-playground-series-jan-2021}{unknown a} & 86.06 $\pm$ 1.4 \\
\hline
\textbf{Mean} & $77.72 \pm 0.2$ \\
\hline
\end{tabular}
\caption{Kaggle p-rank results across all challenges (the higher, the better) for \textbf{Context Blends}. Uncertainty is reported as $\pm$ values, representing the 90\% confidence interval based on the standard error across 8 random seeds.}
\label{contextblendtable}
\end{table*}

Looking at the detailed per-challenge results (Tables~\ref{App:perchallengeresults} and \ref{contextblendtable}) alongside the dataset metadata (Table~\ref{tabChallenges}), we observe that performance patterns vary across tasks. The \textbf{Meta-Informed} method generally performs best on large datasets, particularly in regression tasks, while showing reduced effectiveness on small or extremely “wide” datasets (i.e., those with a high feature-to-sample ratio). On average, it achieves the highest baseline performance with a mean p-rank of $72.69$, outperforming \textbf{Zero-Shot} ($70.39$) and standard hyperparameter optimization methods such as \textbf{LGBM-Hyperopt} ($61.8$), though still below the oracle-like \textbf{Context Blends} ($77.72$). Its strongest results are observed in datasets with tens or hundreds of thousands of samples (e.g., \texttt{mental health}, \texttt{media}, \texttt{insurance}, \texttt{allstate}) and in regression problems such as \texttt{heat flux fi} and \texttt{housing california}, where it consistently outperforms other methods by a large margin. Furthermore, it proves robust in handling datasets with missing values, provided they are sufficiently large. In contrast, its performance is more limited on smaller datasets (e.g., \texttt{influencers}, \texttt{concrete strength}) and it is less competitive on wide datasets with disproportionately many features compared to samples (e.g., \texttt{mercedes}, \texttt{molecules}). In summary, \textbf{Meta-Informed} is particularly well suited for large-scale regression settings with ample training data, while offering more modest gains in low-sample or high-dimensional feature spaces. Notably, while \textbf{LGBM-Hyperopt} is the weakest overall baseline, it still achieves top performance on a few datasets (e.g., \texttt{influencers}, \texttt{concrete strength}), illustrating that in some cases restraining the search space to a single strong predictor can be advantageous.

\newpage
\section{Prompting Schemas}
\label{app:prompts}

\subsection{Current Task Description Format}
\label{app:task_metadata}

For both prompting strategies, the LLM receives the current task description in the following structured format. Below is an example for the Abalone challenge:

{\scriptsize
\begin{verbatim}
    # Metadata for kaggle_abalone
    
    ## name
    kaggle_abalone
    
    ## prediction_type
    regression
    
    ## score_name
    rmsle
    
    ## n_train: 90615    n_test: 60411    total_samples: 151026    train_test_ratio: 1.5
    
    ## features
    total: 9    numeric: 8    numerical_range_avg: 11327.82    categorical: 1
    
    ### unique_values_per_categorical
    min: 3    max: 3    median: 3    mode: 3
    
    ## missing_data
    has_missing: False    total_missing_values: 0    data_density: 1.0
    
    ## target_values
    min: 1    max: 29    mean: 9.697    median: 9.0    std: 3.176    skewness: 1.204    kurtosis: 2.613
\end{verbatim}
}
\newpage
\subsection{Zero-Shot Setting}

The following system prompt is used for the Zero-Shot setting.

\begin{tcolorbox}[title=Zero-Shot System Prompt,colback=gray!5,colframe=gray!80!black,listing only,listing options={basicstyle=\ttfamily\small,breaklines=true}]
You are a data science expert specializing in model blending. 
You will receive a description of a machine learning tasks and dataset. 
Your task is to propose a new model blend with exactly 10 models by completing a given JSON file that describes a new task, maintaining the same format. You must output the json with 10 different choices of models and ``models" as a key following exactly the input format JSON but removing the prank and mean score columns. 
Select models and hyperparameters considering factors such as dataset characteristics and task type. 
Don't forget to give exactly 10 different variations and use the given format for the output adding the needed values lists. 
A predefined hyperparameter grid will be provided beforehand. Ensure your selections of the 10 models adhere to the available hyperparameter choices and that the number of models given is 10.
\end{tcolorbox}

In the Zero-Shot setting, the LLM is not provided with in-context examples. To guide its output, it is instead given the expected JSON schema, as shown below.
\begin{lstlisting}[language=json]
{
  "models": {
    "catboost": {
      "columns": ["bootstrap_type", "border_count", "grow_policy", "l2_leaf_reg", "learning_rate",
      "max_depth", "min_data_in_leaf", "n_estimators", "random_strength"],
      "values": []
    },
    "lgbm": {
      "columns": ["boosting_type", "colsample_bynode", "colsample_bytree", "drop_rate", 
      "learning_rate", "max_bin", "max_depth", "min_child_weight", "min_data_in_leaf",
      "min_split_gain", "n_estimators", "reg_alpha", "reg_lambda", "subsample"],
      "values": []
    },
    "xgboost": {
      "columns": ["colsample_bylevel", "colsample_bynode", "colsample_bytree", "gamma",
      "learning_rate", "max_depth", "min_child_weight", "n_estimators", "reg_alpha", "reg_lambda",
      "subsample"],
      "values": []
    },
    "skmlp": {
      "columns": ["activation", "alpha", "beta_1", "beta_2", "epsilon", "layers",
      "learning_rate_init", "max_iter", "n_iter_no_change", "n_knots"],
      "values": []
    }
  }
}
\end{lstlisting}
\newpage
\subsection{Meta-Informed Setting}
The following system prompt is used for the Meta-Informed setting.
\begin{tcolorbox}[title=Meta-Informed System Prompt,colback=gray!5,colframe=gray!80!black,listing only,listing options={basicstyle=\ttfamily\small,breaklines=true}]
You are a data science expert specializing in model blending. 
You will receive multiple descriptions of machine learning tasks, datasets, and the top 10 performing models in an blend in JSON format, including their individual mean scores and percentile ranks. 
Your task is to propose a new model blend with exactly 10 models by completing a given JSON file that describes a new task, maintaining the same format. You must output the json with 10 different choices of models and "models" as a key following exactly the input format JSON but removing the prank and mean score columns. 
Select models and hyperparameters based on the similarity between the new task and previous ones, considering factors such as dataset characteristics, task type, and model performance trends. 
Don't forget to give exactly 10 different variations and use the same format as the input. 
A predefined hyperparameter grid will be provided beforehand. Ensure your selections of the 10 models adhere to the available hyperparameter choices and that the number of models given is 10.
\end{tcolorbox}

The prompt is enriched with information from other challenges, provided as pairs of task metadata and the top 10 models in the solution blend, formatted as previously described. For each given challenge, we include all other challenges of the same type (classification or regression). 

\section{Chat API Configuration and Defaults}
\label{app:deepseek}
To invoke the DeepSeek-R1 API, we use the \texttt{client.chat.completions.create} function from the OpenAI SDK with temperature $T=1.0$ and default hyperparameters. 

For more information, refer to the official documentation:
\begin{itemize}
  \item OpenAI Platform: \href{https://platform.openai.com/}{https://platform.openai.com/}
  \item DeepSeek API Docs: \href{https://api-docs.deepseek.com/}{https://api-docs.deepseek.com/}
\end{itemize}

\section{Example Reasoning Traces}
\label{app:traces}
To illustrate the model’s decision-making process, we show a few selected excerpts from its reasoning traces on different Kaggle tasks.

\begin{tcolorbox}[colback=gray!5,colframe=gray!40!black,title=Mercedes Dataset (Zero-Shot),fonttitle=\bfseries]
\scriptsize
Okay, so I need to create a new model blend for the \texttt{Kaggle Mercedes} regression task. The dataset has 377 features, mostly categorical (376) and one numeric. The target is continuous with some skewness. The goal is to propose 10 different models across CatBoost, LGBM, XGBoost, and SKMLP, using the given hyperparameters.

First, let's think about the dataset characteristics. Since there are a lot of categorical features, tree-based models like CatBoost, LGBM, and XGBoost might perform well because they handle categorical variables effectively. \texttt{skmlp} might need more tuning but could add diversity to the blend.

For CatBoost, since it's good with categorical data, I'll focus on parameters that handle that. \texttt{bootstrap\_type} could be Bayesian variants for better handling. \texttt{border\_count} might be higher (like 512 or 1024) to capture more splits. \texttt{grow\_policy} could be \texttt{Lossguide} for deeper trees. \texttt{learning\_rate} should be moderate, maybe 0.05 to 0.1. L2 regularization (\texttt{l2\_leaf\_reg}) around 3.0 to prevent overfitting. \texttt{n\_estimators} around 500--700 for sufficient depth. \texttt{random\_strength} maybe 1.0 or 5.0 for some randomness.

For LGBM, \texttt{boosting\_type} like \texttt{goss} or \texttt{dart} could be efficient. Since there are many features, \texttt{colsample\_bytree} and \texttt{subsample} around 0.8 to reduce overfitting. \texttt{learning\_rate} similar to CatBoost. \texttt{max\_depth} around 7--10. \texttt{min\_data\_in\_leaf} higher (like 50) given the dataset size. \texttt{n\_estimators} around 300--500. Regularization parameters (\texttt{reg\_alpha} and \texttt{reg\_lambda}) around 0.5--2.0.

XGBoost would need similar considerations. \texttt{colsample\_bytree} and \texttt{subsample} around 0.8. \texttt{learning\_rate} 0.05--0.1. \texttt{max\_depth} 6--10. \texttt{gamma} around 0.5 to control splits. \texttt{reg\_alpha} and \texttt{reg\_lambda} similar to LGBM. \texttt{n\_estimators} 300--500.

\texttt{skmlp} might have layers like \texttt{512-256} to handle high dimensionality. \texttt{activation=relu} for non-linearity. \texttt{learning\_rate\_init=0.01}. \texttt{alpha} (L2 regularization) 0.001 or 0.01. Beta parameters (Adam optimizer) default to 0.9 and 0.999. \texttt{max\_iter} high (20000) since MLPs need time. \texttt{n\_iter\_no\_change=10}. \texttt{n\_knots} maybe 10 for spline layers.

I need to ensure 10 models in total. Maybe distribute them as 3 CatBoost, 3 LGBM, 3 XGBoost, and 1 SKMLP for diversity. Each model should have unique hyperparameters within the grid. Check each parameter against the allowed values. For example, CatBoost's \texttt{bootstrap\_type} can't be ``No'' but has Bayesian options. Ensure all selected values are in the provided grid. Also, no duplicates in the models. Each entry must have distinct hyperparameter combinations.
\end{tcolorbox}

\begin{tcolorbox}[colback=gray!5,colframe=gray!40!black,title=Abalone Dataset (Meta-Informed),fonttitle=\bfseries]
\scriptsize
Okay, I need to create a new model blend for the \texttt{kaggle\_abalone} task. Let me start by understanding the problem. The task is regression with RMSLE as the score. The dataset has 9 features, 8 numeric and 1 categorical with 3 unique values. The target has a mean around 9.7, with some skewness and kurtosis. The data is clean with no missing values.

Looking at the past tasks, \texttt{kaggle\_crab\_age} seems similar. Both have regression, a categorical feature with 3 unique values, and similar target distributions (mean around 9--10, skewness $>1$). In \texttt{kaggle\_crab\_age}, the top models were CatBoost and LGBM. For example, CatBoost used \texttt{bootstrap\_type=Bernoulli}, high \texttt{border\_count=1024}, \texttt{grow\_policy=Lossguide}, and \texttt{n\_estimators} around 500--700. LGBM models had \texttt{boosting\_type=gbdt\_1} or \texttt{gbdt\_5}, high \texttt{n\_estimators} (1000), and specific regularization parameters.

Also, \texttt{kaggle\_unknown\_a} had a similar setup with all numeric features and high \texttt{n\_estimators}. XGBoost models there used \texttt{colsample\_bytree} around 0.5--0.8 and \texttt{learning\_rate} 0.01--0.1. SKMLP models used \texttt{activation=logistic}, low \texttt{alpha=0.0001}, and layers such as \texttt{256-128}.

Considering hyperparameters from the grid: for CatBoost, possible \texttt{bootstrap\_type} values include Bernoulli or Bayesian, \texttt{border\_count=1024}, \texttt{grow\_policy=Lossguide} or \texttt{Depthwise}, \texttt{l2\_leaf\_reg} around 0.2--3.0, \texttt{learning\_rate=0.1}, \texttt{max\_depth} around 9--12, and \texttt{min\_data\_in\_leaf=700}. For LGBM, suitable choices are \texttt{boosting\_type=gbdt\_1} or \texttt{gbdt\_5}, \texttt{colsample\_bynode} 0.8--1.0, \texttt{learning\_rate=0.05}, \texttt{max\_depth} around 90, and \texttt{min\_data\_in\_leaf=50}. XGBoost could use \texttt{colsample\_bytree} 0.5--0.8, \texttt{gamma} 0.0--1.5, \texttt{learning\_rate} 0.05--0.1, \texttt{max\_depth=9}, and \texttt{n\_estimators=1000}. For SKMLP, I will use \texttt{activation=logistic}, \texttt{alpha=0.0001}, layers \texttt{256-128}, and \texttt{learning\_rate\_init=0.001}.

I need to select 10 models across different algorithms. Let's pick 4 CatBoost, 3 LGBM, 2 XGBoost, and 1 SKMLP, ensuring hyperparameters are drawn from the predefined grid. Check each parameter against the allowed values (e.g., CatBoost \texttt{bootstrap\_type}, LGBM \texttt{boosting\_type}), and ensure \texttt{n\_estimators} is valid. Each model must be unique in its parameter combination. Finally, I will compile the JSON structure without \texttt{mean\_score} and \texttt{prank}, keeping the same format as the input examples.
\end{tcolorbox}

\begin{tcolorbox}[colback=gray!5,colframe=gray!40!black,title=Heat Flux Fi Dataset (Meta-Informed),fonttitle=\bfseries]
\scriptsize
Okay, I need to create a new model blend for the \texttt{kaggle\_heat\_flux\_fi} task. Let's start by understanding the new task's characteristics. The prediction type is regression, and the score is RMSE. The dataset has 21{,}229 training samples and 10{,}415 test samples. There are 9 features: 7 numeric and 2 categorical. The numeric features have an average range of about 6{,}121, which is similar to some previous tasks like \texttt{kaggle\_abalone} or \texttt{kaggle\_crab\_age}. The categorical features have 4 and 11 unique values, which is a moderate number. There's missing data with a data density of 0.88, which means some imputation might be needed. The target has a mean close to zero, a negative skewness, and high kurtosis, indicating a peaky distribution with a left tail.

Looking at past tasks, models that performed well on datasets with mixed numeric and categorical features, especially those using CatBoost and LGBM, might be relevant. For example, \texttt{kaggle\_abalone} used CatBoost with \texttt{grow\_policy=Lossguide} and high \texttt{border\_count}, which handles categorical features well. Also, tasks with missing data like \texttt{kaggle\_media} had LGBM models with certain parameters.

The hyperparameter grids need to be considered. For CatBoost, handling missing data is inherent, so models with \texttt{bootstrap\_type} like Bayesian or MVS might be good. LGBM can handle missing values too. XGBoost doesn't handle categoricals natively, so maybe less emphasis here unless the categoricals are encoded. Since the new task has categorical features, CatBoost and LGBM are probably better choices.

Looking at the previous top models, for regression tasks with similar features, CatBoost models with higher \texttt{border\_count} (like 1024) and LGBM with higher \texttt{max\_depth} (like 40, 90) and \texttt{n\_estimators} (1000) performed well. Also, models with lower learning rates (0.1, 0.15) and regularization (\texttt{l2\_leaf\_reg} around 0.5--2.0) were common.

Considering the new task's missing data, models that handle missing values well are crucial. CatBoost's \texttt{bootstrap\_type} options like Bayesian or MVS, and LGBM with \texttt{drop\_rate} and \texttt{subsample} parameters could be beneficial. Also, since the target has a negative skew, maybe models that are robust to such distributions, like those with appropriate loss functions or regularization.

Now, selecting 10 models. Let's check the hyperparameter grids to ensure valid choices. For CatBoost, possible \texttt{bootstrap\_type}: Bayesian\_0, MVS, Bernoulli. \texttt{border\_count} options: 32, 64, 128, etc. For LGBM, \texttt{boosting\_type}: \texttt{gbdt\_0}, \texttt{dart\_1}, \texttt{goss}. Also, \texttt{subsample} and \texttt{colsample} parameters. \texttt{skmlp} might be less prioritized due to the categorical features, but maybe include one if previous tasks with similar data used it.

Looking at the previous models, in \texttt{kaggle\_unknown\_a}, CatBoost with \texttt{bootstrap\_type=No} and \texttt{grow\_policy=Depthwise} was used. For the new task, maybe mix different bootstrap types. For LGBM, models with higher \texttt{n\_estimators} (1000) and \texttt{max\_depth} (like 40, 90) from \texttt{kaggle\_abalone} and \texttt{kaggle\_housing\_california}.

Also, in the new task, the data has missing values, so models that handle missing data without imputation (like CatBoost and LGBM) are preferred. XGBoost requires handling missing data externally, so maybe less emphasis unless parameters are adjusted, but the hyperparameter grid doesn't include handling for missing data, so maybe avoid XGBoost unless necessary.

Looking at \texttt{skmlp}, in past tasks like \texttt{kaggle\_insurance}, \texttt{activation=logistic} with certain layers was used. Maybe include one \texttt{skmlp} model for diversity.

Now, compiling 10 models.
\end{tcolorbox}

\newpage
\section{Base Model Details}
\label{appendix:base_models}

We use four base models in our experiments: \textsc{XGBoost} \citep{Chen2016}, \textsc{CatBoost} \citep{Prokhorenkova2018}, \textsc{LGBM}, and \textsc{SKMLP} \citep{Pedregosa2011}. The corresponding hyperparameter grids for each model are provided in Figure~\ref{figBasePredictorHypers}.

\begin{figure*}[h]
\centering
CatBoost hyperparameter grid.
\includegraphics[width=0.8\textwidth]{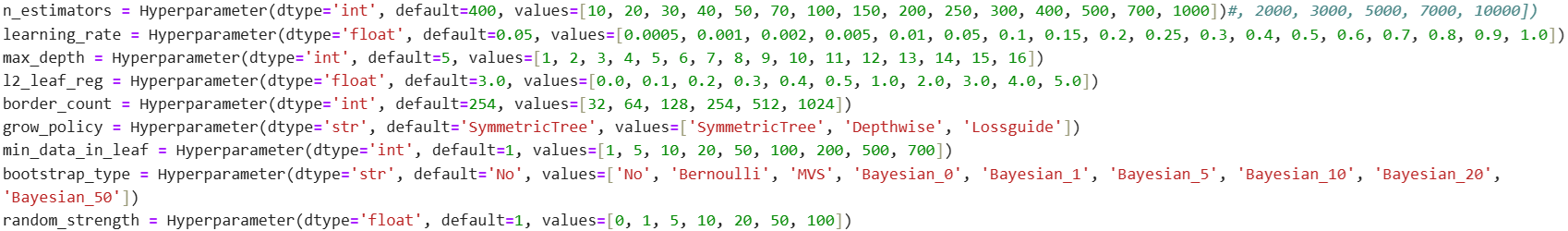}

LGBM hyperparameter grid.
\includegraphics[width=0.8\textwidth]{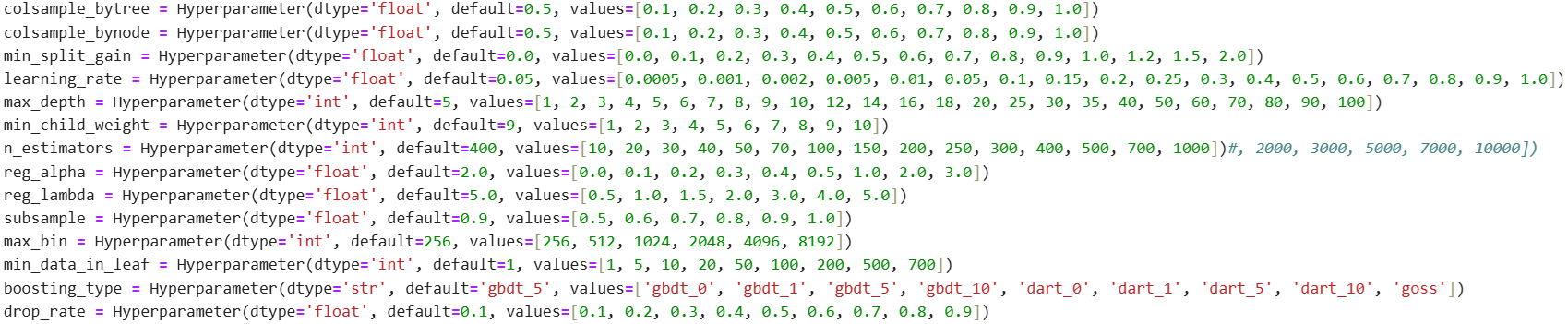}

XGBoost hyperparameter grid.
\includegraphics[width=0.8\textwidth]{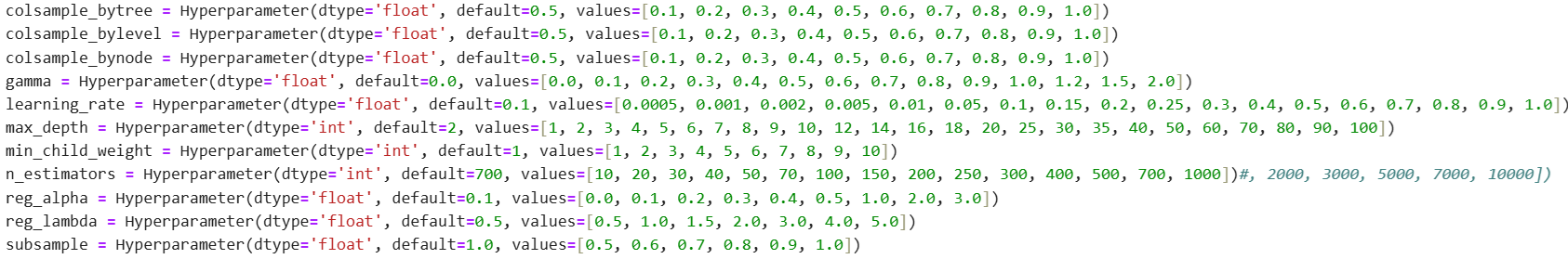}

SKMLP hyperparameter grid.
\includegraphics[width=0.8\textwidth]{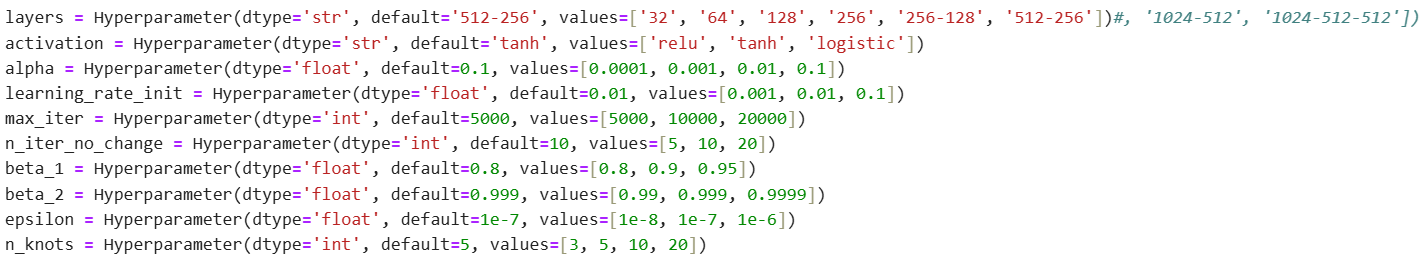}
\caption[]{Base models hyperparameters.}
\label{figBasePredictorHypers}
\end{figure*}

\section{Baselines description}
\label{app:baselines}
\subsection{Context-Random}
For the \textbf{Context-Random} baseline, we uniformly sample $n$ model–hyperparameter configurations from the same pool of prior-task blends that are provided as context in the \textbf{Meta-Informed} setting. This isolates whether improvements come from meaningful adaptation by the LLM or simply from re-using high-quality configurations already present in the context.  

We fix $n=10$ to match the number of configurations proposed by the LLM in a single run.
\subsection{Random-Hyperopt}
For the \textbf{Random-Hyperopt} baseline, we use \textsc{HEBO} to optimize hyperparameters within a model family, but the model family itself is selected uniformly at random at each round. Concretely, at each iteration one of the base learners is sampled with equal probability, after which \textsc{HEBO} proposes a new configuration for that family. This ensures a simple exploration strategy without bias toward any particular model type.

\subsection{LGBM-Hyperopt}
For the \textbf{LGBM-Hyperopt} baseline, we restrict the search space to the LightGBM model family. 
At each evaluation round, we apply the \textsc{HEBO} optimizer to propose a new LightGBM configuration, which is then trained and evaluated on the target dataset. 
This baseline isolates the performance of hyperparameter optimization when applied to a single strong gradient boosting method without model family selection.  
As with the other baselines, we allocate a fixed budget of 10 evaluations when comparing against the LLM recommendations.

\subsection{MaxUCB-Hyperopt}

For the \textbf{MaxUCB-Hyperopt} baseline, we implement the bandit-based CASH formulation proposed by \cite{balef2025cashbanditsmaxkarmed}. In this setting, each candidate model family is treated as an arm in a multi-armed bandit, and hyperparameter optimization is carried out within the selected arm using \textsc{HEBO}. The Max-UCB algorithm balances exploration of new model families with exploitation of those that have already demonstrated promising performance.  

At each round $t$, the utility of arm $i$ is computed as:

\[
U_i = \max(r_{i,1}, \ldots, r_{i,n_i}) + \left(\frac{\alpha \log(t)}{n_i}\right)^2,
\]

where $r_{i,j}$ denotes the observed rewards (validation scores) from the $j$-th configuration of model family $i$, and $n_i$ is the number of configurations tried so far for that family. The algorithm selects the arm 
\[
I_t = \arg \max_{i \leq K} U_i,
\] 
applies \textsc{HEBO} within that model family to propose a new hyperparameter configuration, and observes the resulting reward.  

Following recommendations from the original paper, we set the exploration parameter to $\alpha = 0.5$, which provides a favorable balance between exploration and exploitation across tasks.
\newpage
\section{Robustness to Prompt Shuffling}
\label{app:robustness}
Large language models can sometimes exhibit position or recency biases \citep{wang-etal-2023-primacy, wang2025eliminating}, raising the question of whether the \textbf{Meta-Informed} strategy is sensitive to the way information is ordered inside the prompt. To test this, we generate two independent shuffled versions of the Meta-Informed prompt for each dataset–seed pair. In each shuffle, we randomly permute (i) the order of support datasets, (ii) the order of model families listed in the schema, and (iii) the order of hyperparameters within each family. The underlying content is unchanged, only the presentation order differs. The experimental setup is otherwise identical on the 22 Kaggle datasets the same contexts, ensembling pipeline, and $p_{\text{rank}}$ as the evaluation metric.

\paragraph{Results.}
Across 22 paired comparisons, we observe no statistically significant difference between the two shuffled versions (paired t-test: $t = -1.48$, $p = 0.153$, $df = 21$). The mean difference in $p_{\text{rank}}$ is $-1.86$ points, indicating that the second shuffle tends to achieve slightly better ranks, though this difference is not significant. The effect size is small (Cohen's $d = -0.32$), and a non-parametric Wilcoxon signed-rank test confirms these findings ($p = 0.149$). Individual challenge results show mixed outcomes, with some favoring each version, consistent with random variation rather than systematic bias.

These results are consistent with the \textbf{Meta-Informed} strategy being robust to prompt ordering, with no evidence that the arrangement of elements within the prompt systematically affects performance.

\begin{table*}[ht!]
\centering
\caption{Private leaderboard p-rank for two shuffled prompt versions across 22 Kaggle datasets.}
\label{tab:prompt_shuffling}
\begin{tabular}{|l|c|c|c|}
\hline
\textbf{Kaggle Challenge} & \textbf{Shuffle 1} & \textbf{Shuffle 2} & $\Delta$ (1--2) \\
\hline
\href{https://www.kaggle.com/competitions/playground-series-s4e4}{abalone} & 89.64 & 88.30 & +1.34 \\
\href{https://www.kaggle.com/competitions/allstate-claims-severity}{allstate} & 59.34 & 70.34 & -11.00 \\
\href{https://www.kaggle.com/competitions/playground-series-s3e3}{attrition} & 60.45 & 65.41 & -4.96 \\
\href{https://www.kaggle.com/competitions/playground-series-s3e14}{blueberry} & 89.33 & 88.43 & +0.91 \\
\href{https://www.kaggle.com/competitions/playground-series-s4e1}{churn} & 70.79 & 72.08 & -1.29 \\
\href{https://www.kaggle.com/competitions/playground-series-s3e26}{cirrhosis} & 70.62 & 69.30 & +1.32 \\
\href{https://www.kaggle.com/competitions/playground-series-s3e9}{concrete strength} & 84.58 & 95.82 & -11.24 \\
\href{https://www.kaggle.com/competitions/forest-cover-type-prediction}{covertype} & 37.65 & 45.21 & -7.57 \\
\href{https://www.kaggle.com/competitions/playground-series-s3e16}{crab age} & 70.26 & 70.26 & 0.00 \\
\href{https://www.kaggle.com/competitions/GiveMeSomeCredit}{credit fusion} & 95.67 & 96.86 & -1.19 \\
\href{https://www.kaggle.com/competitions/tabular-playground-series-aug-2022}{failure} & 48.99 & 39.19 & +9.80 \\
\href{https://www.kaggle.com/competitions/playground-series-s3e15}{heat flux fi} & 96.83 & 96.39 & +0.43 \\
\href{https://www.kaggle.com/competitions/playground-series-s3e1}{housing california} & 56.17 & 57.04 & -0.87 \\
\href{https://www.kaggle.com/competitions/playground-series-s3e22}{horses} & 72.23 & 85.85 & -13.63 \\
\href{https://www.kaggle.com/competitions/predict-who-is-more-influential-in-a-social-network}{influencers} & 84.85 & 85.61 & -0.76 \\
\href{https://www.kaggle.com/competitions/tabular-playground-series-feb-2021}{insurance} & 79.83 & 69.85 & +9.98 \\
\href{https://www.kaggle.com/competitions/playground-series-s4e10}{loan approval} & 76.33 & 74.78 & +1.56 \\
\href{https://www.kaggle.com/competitions/playground-series-s3e11}{media} & 59.56 & 67.12 & -7.56 \\
\href{https://www.kaggle.com/competitions/playground-series-s4e11}{mental health} & 96.50 & 98.44 & -1.94 \\
\href{https://www.kaggle.com/competitions/mercedes-benz-greener-manufacturing}{mercedes} & 20.35 & 23.10 & -2.75 \\
\href{https://www.kaggle.com/competitions/bioresponse}{molecules} & 99.71 & 98.28 & +1.43 \\
\href{https://www.kaggle.com/competitions/tabular-playground-series-jan-2021}{unknown a} & 72.05 & 74.88 & -2.84 \\
\hline
\textbf{Mean} & 73.43 & 75.29 & -1.86 \\
\hline
\end{tabular}
\end{table*}

\end{document}